\documentclass{article}

% if you need to pass options to natbib, use, e.g.:
%     \PassOptionsToPackage{numbers, compress}{natbib}
% before loading neurips_2024

% ready for submission
% \usepackage{neurips_2024}

% to compile a preprint version, e.g., for submission to arXiv, add add the
% [preprint] option:
%     \usepackage[preprint]{neurips_2024}

% to compile a camera-ready version, add the [final] option, e.g.:
    \usepackage[final]{neurips_2024}

% to avoid loading the natbib package, add option nonatbib:
   % \usepackage[nonatbib]{neurips_2024}

\usepackage[utf8]{inputenc} % allow utf-8 input
\usepackage[T1]{fontenc}    % use 8-bit T1 fonts
\usepackage{hyperref}       % hyperlinks
\usepackage{url}            % simple URL typesetting
\usepackage{booktabs}       % professional-quality tables
\usepackage{amsfonts}       % blackboard math symbols
\usepackage{nicefrac}       % compact symbols for 1/2, etc.
\usepackage{microtype}      % microtypography
\usepackage{xcolor}         % colors

\usepackage{graphicx}
\usepackage{caption}
\usepackage{subcaption}
\usepackage{booktabs}
\usepackage{amsmath}
\usepackage{amssymb}
\usepackage{mathtools}
\usepackage{amsthm}
\usepackage{wrapfig}
\usepackage{makecell}
\usepackage{booktabs}

\bibliographystyle{unsrtnat}

\usepackage[capitalize,noabbrev]{cleveref}
\Crefname{section}{Sec.}{Secs.}
\Crefname{section}{Section}{Sections}
\Crefname{table}{Table}{Tables}
\Crefname{table}{Tab.}{Tabs.}
\Crefname{assumption}{Assumption}{Assumptions}
\Crefname{problem}{Problem}{Problems}
\Crefname{lemma}{Lemma}{Lemmas}

%%%%%%%%%%%%%%%%%%%%%%%%%%%%%%%%
% THEOREMS
%%%%%%%%%%%%%%%%%%%%%%%%%%%%%%%%
\theoremstyle{plain}
\newtheorem{theorem}{Theorem}[section]

\newtheorem{lemma}[theorem]{Lemma}

\theoremstyle{definition}

\newtheorem{assumption}{Assumption}[section]
\newtheorem{problem}{Problem}[section]

\theoremstyle{remark}

%%%%%%%%%%%%%%%%%%%%%%%%%%%%%%%%
% COMMANDS
%%%%%%%%%%%%%%%%%%%%%%%%%%%%%%%%

\newcommand{\rbb}{\ensuremath{\mathbb{R}}}
\newcommand{\pbb}{\ensuremath{\mathbb{P}}}
\newcommand{\qbb}{\ensuremath{\mathbb{Q}}}
\newcommand{\ebb}{\ensuremath{\mathbb{E}}}
\newcommand{\fcal}{\ensuremath{\mathcal{F}}}

\newcommand{\ie}{i.e.\ }
\newcommand{\eg}{e.g.\ }

\usepackage{xspace}

\title{Conditioning non-linear and\\ infinite-dimensional diffusion processes}

% The \author macro works with any number of authors. There are two commands
% used to separate the names and addresses of multiple authors: \And and \AND.
%
% Using \And between authors leaves it to LaTeX to determine where to break the
% lines. Using \AND forces a line break at that point. So, if LaTeX puts 3 of 4
% authors names on the first line, and the last on the second line, try using
% \AND instead of \And before the third author name.

\author{%
Elizabeth Louise Baker \\
Department of Computer Science,\\
University of Copenhagen\\
  \texttt{elba@di.ku.dk} \\
  \And
  Gefan Yang \\
Department of Computer Science,\\
University of Copenhagen\\
  \texttt{gy@di.ku.dk} \\
  \And
  Michael L. Severinsen \\
Globe Institute,\\ University of Copenhagen\\
  \texttt{michael.baand@sund.ku.dk} \\
  \And
  Christy Anna Hipsley \\
Department of Biology,\\
University of Copenhagen\\
  \texttt{christy.hipsley@bio.ku.dk} \\
  \And
  Stefan Sommer \\
Department of Computer Science,\\
University of Copenhagen\\
  \texttt{sommer@di.ku.dk} \\
}

\begin{document}

\maketitle

\begin{abstract}

  Generative diffusion models and many stochastic models in science and engineering naturally live in infinite dimensions before discretisation. 
  To incorporate observed data for statistical and learning tasks, one needs to condition on observations. 
  While recent work has treated conditioning linear processes in infinite dimensions, conditioning non-linear processes in infinite dimensions has not been explored.
  This paper conditions function-valued stochastic processes \textit{without prior discretisation}.
  To do so, we use an infinite-dimensional version of Girsanov's theorem to condition a function-valued stochastic process, leading to a stochastic differential equation (SDE) for the conditioned process involving the score.
  We apply this technique to do time series analysis for shapes of organisms in evolutionary biology, where we discretise via the Fourier basis and then learn the coefficients of the score function with score matching methods.
  \end{abstract}

  \section{Introduction}

When modelling finite-dimensional data, such as temperature or speed, there are well-known methods for incorporating observations into stochastic or probabilistic models, for example, those based on Gaussian-process regression \citep{rasumussen2005Gaussian}. 
For non-linear models, techniques like Doob's $h$-transform can be used \citep[Chapter 6]{rogers_diffusions_2000}. 
But for data that is function-valued (and thus infinite-dimensional) with non-linear models, conditioning is still an open problem.
This paper introduces a way of conditioning infinite-dimensional diffusion processes by introducing an infinite-dimensional version of Doob's $h$-transform. We then discretise the conditioned process and sample from it; that is, we condition and then discretise rather than discretising and then conditioning.

\begin{figure*}[ht]
\centering
\includegraphics{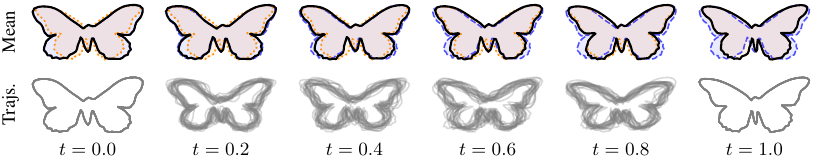}
\caption{We condition an SDE between two curves, representing two butterfly species, (starting from red dashed and ending at green dashed). Each time point of the trajectory represents a shape. \textit{Row 1:} We take the mean over $20$ trajectories. \textit{Row 2:} We plot the $20$ individual trajectories, used in the mean calculation.   
}
\label{fig: error plot}
\end{figure*}

We present methods to condition a process to hit a specific set at the end time, also known as bridges.
This method covers the case of conditioning strong solutions to Hilbert space-valued stochastic differential equations (SDEs).
For the conditioning, we consider two scenarios.
The first is that the transition operators of the SDE solution are smooth, which is generally not obvious \citep{goldys_ornsteinuhlenbeck_2008}. 
In the second scenario, we condition on observations with Gaussian noise. 
This technique can be applied whenever the solution to the SDE is sufficiently differentiable.
To condition, we use the infinite-dimensional counterparts of Itô's lemma and Girsanov's theorem, enabling us to define a Doob's $h$-transform analogously to finite dimensions. 
We then use score-matching techniques, allowing us to sample from the conditioned process. We do this by training on the coefficients of the stochastic process, represented in the Fourier basis.

One specific use case is modelling changes in morphometry (\ie shapes) of organisms in evolutionary biology. 
The morphometry of an organism can be modelled as points, curves or surfaces embedded in Euclidean space. \citet{felsenstein_phylogenies_1985} suggests using Wiener processes to model changes in morphometry, for example, height. \citet{sommer_stochastic_flows_2021} propose extending this methodology to whole shapes by using stochastic flows to define diffeomorphisms on Euclidean space \citep{Kunita_1997}.
Then, changes of the shapes are modelled deterministically as diffeomorphisms on $\rbb^d$ that act on the embeddings \citep{younes_shapes_2019}.
Recent work has generalised this to the stochastic setting by considering diffusion bridges between shapes \citep{arnaudon_geometric_2019,arnaudon_diffusion_2022}.
However, they first discretise the shapes and then find a diffusion bridge between the discretised shapes.
In this work, we condition, and then we discretise. In doing so, we show that as the number of points goes to infinity, the bridge is still well-defined. Moreover, we may use other discretisations for shapes such as Fourier bases.

\section{Related work and contributions}

\subsection{Related work}
\begin{wrapfigure}{r}{0.4\textwidth}
\includegraphics[width=\linewidth]{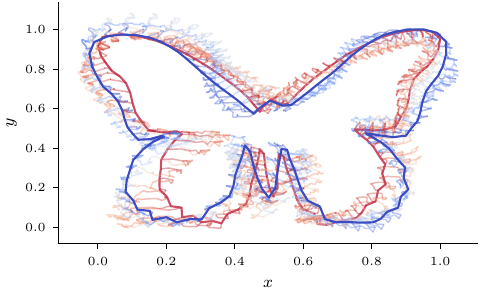}
\caption{A stochastic process between two butterfly outlines (Papilio polytes in red, Parnassius honrathi in blue).}
\label{fig:bridge butterfly}
\end{wrapfigure}

In finite dimensions, methods have been developed to approximate non-linear bridge processes of the form \cref{eq: finite conditioned sde}. 
When conditioning on an end point $y$ at time $T$, the conditioned process contains an intractable term $\nabla_x \log p(t, x; T, y)$, called the score term. This term can be replaced with $\nabla_x \log \tilde{p}(t, x; T, y)$, where $\tilde{p}$ is the transition function from another SDE with a known closed form, such as Brownian motion or other linear processes \citep{delyon_simulation_2006, van_der_meulen_automatic_2022}. 
Then we can sample from the approximation instead, and use Monte Carlo methods using the ratio given by the Radon-Nikodym derivative as a likelihood ratio, thereby sampling from the true path distribution \citep{van_der_meulen_automatic_2022}.

Recently, \citet{heng2021simulating} adapted the score-matching methods of \citet{vincent2011connection,song2019generative,song2021score} to learn the score term for non-linear bridge processes. To do so, they introduce a new loss function to learn the time reversal of the process. They then learn the time reversal of the time reversal, which gives the forward bridge. Our work uses their method to learn the score term after discretising the SDE via truncated sums of basis elements. \citet{phillips2022spectral} also consider using truncated sums of basis elements for discretising SDEs, however, only for infinite-dimensional Ornstein-Uhlenbeck processes, which are linear.

Recent work on generative modelling has investigated score matching for infinite-dimensional diffusion processes \citep{pidstrigach_infinite-dimensional_2023, franzese_continuous-time_2023, bond-taylor_infty-diff_2023, hagemann_multilevel_2023, lim_score-based_2023}. This problem is similar to our task of conditioning an SDE, but not the same: The main difference is that our SDEs are fixed, known a priori, and potentially nonlinear, whereas in generative modelling the SDE can be chosen freely. Hence, generative modelling often uses linear SDEs because the transition densities are known in closed form. In this sense, our problem relates to generative modelling but has a different setup.

In shape space, there is interest in defining stochastic bridges between shapes \citep{Arnaudon_Holm_Sommer_2023}. Shapes are represented in the LDDMM framework \citep{younes_shapes_2019}, where they are modelled as embeddings in Euclidean space, \eg curves and surfaces or sets of landmarks approximating a curve or surface.
The deterministic image registration problem of matching two shapes is solved by finding a ``minimum energy'' mapping.
More specifically, for two shapes $s_0, s_1: \rbb^d \to \rbb^d$, we find a diffeomorphism $f: [0, T]\times \rbb^d \to \rbb^d$ such that $f(0, \cdot) = s_0(\cdot)$ and at $f(T, \cdot) = s_1(\cdot)$ and $f$ optimises a given energy functional \citep{bauer_overview_2014}.

We employ a stochastic version of the LDDMM framework for our experiments.
Instead of direct paths from $f_0$ to $f_T$ that optimise an energy functional, we define stochastic paths of diffeomorphisms between two shapes.
For infinite-dimensional shapes such as curves, stochastic shape analysis was the focus of \citep{trouve_shape_2012, vialard_extension_2013, arnaudon_geometric_2019}, where stochastic processes are defined in the LDDMM framework.
However, these do not explore the problem of conditioning the defined processes.
In \citep{arnaudon_diffusion_2022}, bridges between finite-dimensional shapes are derived. This is the first work to bridge between infinite-dimensional shapes.

\subsection{Contributions}
\begin{enumerate}
    \item We derive Doob's $h$-transform for infinite-dimensional non-linear processes, allowing conditioning without first discretising the model.
    \item We detail two models: one for direct conditioning on data and the second for assuming some observation error.
    \item {We use score matching to learn the score arising from the $h$-transform by training on the coefficients of the Fourier basis.}
    \item We demonstrate our method in modelling the changes in the shapes of butterflies over time.
\end{enumerate}

\section{Problem Statement} 

We assume that the data lives in a separable Hilbert space  $(H, \langle \cdot, \cdot \rangle)$ and let $(\Omega, \mathcal{F}, \mathbb{P})$ be a probability space with natural filtration $\{\mathcal{F}_t\}$ and take $\mathcal{B}(H)$ to be the Borel algebra. 
We take a Wiener process $W$ on a separable Hilbert space $U$ (where $U$ can equal $H$) with covariance given by $Q$.
We then consider Hilbert space-valued SDEs of the form
\begin{align}\label{eq: sde}
        \begin{aligned}
            \mathrm{d}X(t) = (AX(t) + F(X(t))) \mathrm{d}t + B(X(t)) \mathrm{d}W(t), \qquad
            X(0) = \xi_0 \in H,
        \end{aligned}
\end{align}
where $A:D(A) \subset H \to H$ is the infinitesimal generator of a strongly continuous semigroup, 
and $F:[0, T]\times H \to H,$ $B:[0, T]\times H \to \mathrm{HS}(Q^{1/2}(U), H),$ 
where $\mathrm{HS}(Q^{1/2}(U), H)$ denotes the Hilbert-Schmidt operators from $Q^{1/2}(U)$ to $H$, and $Q^{1/2}$ is the unique, non-negative symmetric, linear operator on $U$ satisfying $Q^{1/2}\circ Q^{1/2}=Q$ (see \citet[Proposition 2.3.4]{rockner_concise_2007} for details).
Note that $F$ and $B$ can depend non-linearly on $X(t)$, so other methods, such as Gaussian process regression, cannot be used to incorporate data.
We also need the following assumptions about \Cref{eq: sde}:
\begin{assumption}\label{a: unique strong solution}
\Cref{eq: sde} has a unique strong Markov solution denoted $X(t, \xi_0)$.
\end{assumption}
\begin{assumption}\label{a: twice diff solution} 
The solution $X(t, \xi)$ is twice Fréchet differentiable with respect to the initial value $\xi$, with derivatives continuous on $[0, T] \times H$.
\end{assumption}
\Cref{a: unique strong solution} is strong, but is satisfied when $A$ is bounded and $F$ and $B$ satisfy certain Lipschitz continuity and boundedness conditions.
With some extra Fréchet differentiability conditions on $F$ and $B$, \Cref{a: twice diff solution} will also be satisfied.
See \citet[Part II]{da_prato_stochastic_2014} for details on solutions. Under the above setup, we consider two problems in conditioning the model on given observational data.
First, we tackle the exact matching problem:
\begin{problem}\label{p: exact} 
(Exact matching) Condition $X$ such that $X(T, \xi_0) \in \Gamma$ where $\Gamma \subseteq H$ is a set with nonzero measure.
\end{problem}
More precisely, we define a new probability measure, under which the expectation equals the original expectation conditioned on the event that $X(T, \xi_0)\in \Gamma$: $\ebb_{new}[\cdot]= \ebb[\cdot\mid X(T, \xi_0)\in \Gamma]$. 
\Cref{sec: exact matching} discusses this. To solve this, we also require an extra assumption:
\begin{assumption}\label{a: twice diff transition fn} 
The transition operator $P(\xi, t, \Gamma)\coloneqq \ebb[\delta_\Gamma(X(t, \xi))]$ is twice Fréchet differentiable with respect to $\xi$, and once with respect to $t$ with continuous derivatives.
\end{assumption}
In infinite dimensions \Cref{a: twice diff transition fn} is a strong assumption; however, we will study some specific sets $\Gamma$ for which this is satisfied in \cref{sec: discretisation}.
Moreover, with extra conditions on $A$ and $Q$, there are also SDEs that satisfy this assumption for any nonzero measure Borel set. See \citet[Theorem 9.39 and Theorem 9.43]{da_prato_stochastic_2014} and \citet[Section 6.5 and Section 7.3]{cerrai2001second} for examples.
The inexact matching problem does not require \Cref{a: twice diff transition fn} and allows us to consider observation noise:
\begin{problem}\label{p: inexact}
    (Inexact matching) Condition $X$ so that $X(T, \xi_0)$ is ``near'' an observed function $V$.
\end{problem}
Exact matching could be rephrased as conditioning such that at time $T$, the distance between $X(T, \xi_0)$ and a set $\Gamma$ is equal to 0.
For the inexact matching, we instead condition such that the distance between $X(T, \xi_0)$ and a target set $\Gamma$ (or target function $V$) is Gaussian with mean 0. 
More generally, we can also take any differentiable radial basis function on the distance instead of a Gaussian.

For both problems, we show that the process $X(t, \xi_0)$ conditioned to exhibit the wanted behaviour at time $T$, will be a process $X^c(t, \xi_0)$ satisfying the SDE
\begin{align}\label{eq: conditioned sde}
    \begin{cases}
        \begin{aligned}
        \mathrm{d}X^c(t) &= [AX^c(t) + F(X^c(t))]\mathrm{d}t + B(X^c(t)) \mathrm{d}W(t)\\ 
        &~~~~~~~+ B(X^c(t))B(X^c(t))^*\nabla \log h(t, X^c(t))\mathrm{d}t\\
        X^c(0) &= \xi_0,
        \end{aligned}
    \end{cases}
\end{align}
where $\nabla_\xi \log (h(t, \xi))$ is a score function. 
For the exact matching problem, we will see that $h(t, \xi)=\pbb(X(T)\in \Gamma \mid X(t)=\xi)=\ebb[\delta_\Gamma(X(T-t, \xi))]$. 
For the inexact matching problem, we will instead set $h(t, \xi)= \ebb[\|f(X(T-t, \xi)-V\|_H; 0, \sigma)]$, for $f$ a Gaussian function, and $V$ a target function.
The SDE in \Cref{eq: conditioned sde} is analogous to the case of conditioning in finite dimensions, so the form may not be surprising. 
However, it is not obvious that this should work for non-linear equations in infinite dimensions.

\section{Background}

\subsection{Strong Markov solutions}

We consider Hilbert space-valued SDEs of the form \Cref{eq: sde}
for $W$ a $Q$-Wiener process in a Hilbert space $U$, where $Q$ can be the identity operator. 
We only consider strong solutions to \Cref{eq: sde}. 
An $H$-valued predictable process $X$ is a strong solution of \Cref{eq: sde} if $\forall t \in [0, T]$ it satisfies a well-defined integral
\begin{align}
\begin{split}
    X(t) = \xi + \int_0^t [&AX(s) +F(s, X(s))]\mathrm{d}s
    +\int_0^t B(s, X(s))\mathrm{d}W(s).
\end{split}
\end{align}
We refer to \citet{da_prato_stochastic_2014} for a discussion on the existence of strong solutions (and more general solutions). 
However, when $F, B$ satisfy some Lipschitz and linear growth conditions, and when $A$ is bounded, \Cref{eq: sde} has a unique strong solution. 
Since this is true for any initial value $\xi$, we use the notation $X(t, \xi)$ to mean the unique solution of \Cref{eq: sde} with initial value $\xi$.
This solution is a Markov process and for $f: H \to \rbb$ a bounded function, measurable with respect to the Borel algebra the transition operators 
$P_tf(\xi) = \ebb[f(X(t, \xi))]$ satisfy the Markov condition:
% \begin{subequations}
\begin{align}\label{eq: markov property}
    \ebb[\psi(X(t, \xi))\mid\fcal_s] &= \ebb[\psi(X(t-s, X(s, \xi))) ]
    = \ebb[\psi(X(t, \xi))\mid X(s, \xi) ].
\end{align}
% \end{subequations}
This says that the expected value of the solution at a time $t$ from a starting value $\xi$, given all the information from some previous time $s$, is the same as the expected value of the solution started at value $X(s, \xi)$ at time $t-s$.

\subsection{Doob's $h$-transform in finite dimensions}\label{sec: finite doob}
In order to give more intuition for the infinite-dimensional Doob's $h$-transform, we present an informal introduction to the topic. This is all well-known, and the details can be found, for example, in \citet[Chapter 6]{rogers_diffusions_2000}. Doob's $h$-transform in finite dimensions is a useful theory for conditioning stochastic differential equations. For example, suppose we have an SDE in $\rbb^d$
\begin{align}\label{eq: finite sde}
    x(t) = f(t, x(t)) \mathrm{d}t + \sigma(t, x(t)) \mathrm{d}W(t), \quad x(0) = x_0 \in \mathbb{R}^d
\end{align}
and we want to condition this SDE to hit $y$ at time $T$. This corresponds to finding a measure $\mathbb{Q}$ such that $\ebb^\qbb[x(t)]=\ebb [x(t)\mid x(T) = y].$ Doob's $h$-transform allows us to define such a measure using so called $h$-functions. 
Let $p(t, y; t+s, y')$ be the transition density of $x_t$, defined as $\ebb\left[x(t+s)\in A \mid x(t)=y\right] = \int_{A} p(t, y; t+s, y') \mathrm{d}y'$.
Let $h: [0, T] \times \rbb^d \to (0, \infty)$ be a function satisfying $h(t, x) = \int h(t+s, y) p(t, x; t+s, y) \mathrm{d} y$, such that for $z(t) \coloneqq h(t, x(t))$, $\ebb[z(T)]=1$. Then, $z(t)$
is a martingale and there exists a measure $\qbb$ such that $\frac{\mathrm{d}\qbb}{\mathrm{d}\pbb}|_{\mathcal{F}_t} = z_t$. Moreover, under this measure $\qbb$, $x(t)$ satisfies a new SDE

\begin{align}\label{eq: finite conditioned sde}
\mathrm{d} x^c(t) = 
f(t, x^c(t)) \mathrm{d}t + 
\sigma\sigma^\top(t, x^c) \nabla_x \log h(t, x^c(t)) \mathrm{d}t + 
\sigma(t, x^c(t))\mathrm{d}W(t), \quad x^c(0) = x_0.
\end{align}

The SDE in \cref{eq: finite conditioned sde} can be thought of as a conditioned version of the original SDE in \cref{eq: finite sde}. For example, consider what happens when we take $h(t, x) := \frac{p(t, x; T, y)}{p(0, x_0; T, y)}$. Then for a function $f$
\begin{align}
    \ebb^\qbb[f(x(t))] 
    = \int f(z) \frac{p(t, z; T, y)}{p(0, x_0; T, y)}p(0, x_0; t, z) \mathrm{d}z 
    = \ebb[f(x(t)) \mid x(T)=y].
\end{align}
This is one example of Doob's $h$-transform but other $h$ functions can also be defined, for example, to condition $x_t$ to stay within certain bounds, or not to go above a certain value for a certain time period.

For $h(t, x) := \frac{p(t, x; T, y)}{p(0, x_0; T, y)},$ as in conditioning on an end point, there is, in general, no closed form solution for $h$. Different methods to learn the bridge exist \citep{delyon_simulation_2006, Schauer_Meulen_Zanten_2017}. More recently, score-based learning methods were proposed to learn the term $\nabla_x \log p(t, x; T, y)$ \citep{heng2021simulating}, which we will adapt to the infinite-dimensional setting.

\section{Method}

We are interested in conditioning the stochastic process to exhibit a particular behaviour at the end time $T$.
We consider two scenarios.
The first is exact matching: we condition such that given a set $\Gamma\subset H$, then $X(T, \xi_0)\in \Gamma$.
The second is inexact matching: for some
$Y \in H$ we condition such that as $t$ approaches $T$,
$X(t, \xi_0)$ becomes ``close'' to $Y$.

We proceed as follows.
First, we show that we can define a new probability measure given an appropriate random variable.
When we have shown that this is possible, we will discuss options for the random variable.
We will give specific variables, show that they fit some necessary conditions, and solve \Cref{p: exact} and \Cref{p: inexact}.

\subsection{Doob's $h$-transform in infinite dimensions}\label{s: doob's method}

Here, we suppose that we already have an appropriate function $h$ which we show that we can use to rescale our original probability distribution, giving us a conditioned probability.

\begin{theorem}\label{thm: infinite doob's}
Let $h:[0, T]\times H \to \rbb_{>0}$ be a continuous function twice Fréchet differentiable with respect to $\xi \in H$ and once differentiable with respect to $t$, with continuous derivatives. Suppose $X$ is the strong solution to the stochastic differential equation in \cref{eq: sde}.
Moreover, we assume that $Z(t)\coloneqq h(t, X(t))$ is a strictly positive martingale, with $Z(0) = 1$, and $\ebb[Z(T)]=1$.

Then $\mathrm{d}\widehat{\pbb}\coloneqq Z(T) \mathrm{d}\pbb$ defines a new probability measure. Moreover, $X$ satisfies the SDE
\begin{align}
\begin{split}
X(t) = &X(0) + \int_0^t B(X(s))B(X(s))^*\nabla \log h(s, X(s)) \mathrm{d}s \\
+ &\int_0^t [AX(s) + F(X(s))] \mathrm{d}s
+\int_0^t B(X(s))\mathrm{d}\widehat{W}(s),
\end{split}
\end{align}
where $\widehat{W}$ is the Wiener process with respect to the measure $\widehat{\pbb}$.
\end{theorem}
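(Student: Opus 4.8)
The plan is to (1) compute the semimartingale decomposition of $Z(t)=h(t,X(t))$ under $\pbb$ via the Hilbert-space Itô formula, (2) use the martingale hypothesis to kill the finite-variation part and thereby recognise $Z$ as a stochastic exponential, (3) invoke the infinite-dimensional Girsanov theorem to identify $\widehat{W}$, and (4) substitute back into \cref{eq: sde}. For (1): since $X$ is a \emph{strong} solution, $s\mapsto AX(s)+F(X(s))$ is a (pathwise $L^1$-in-time) $H$-valued drift and $s\mapsto B(X(s))$ the diffusion coefficient, so I would apply the Itô formula for Hilbert-space-valued Itô processes (\eg \citet[Thm.~4.32]{da_prato_stochastic_2014}) to the $C^{1,2}$ map $h$. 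Writing $\nabla h,\nabla^2 h$ for the Fréchet derivatives in the space variable and using $\langle g,\Phi\,\mathrm{d}W\rangle=\langle \Phi^{*}g,\mathrm{d}W\rangle$, this gives
\[
\mathrm{d}Z(t)=(\mathcal{L}h)(t,X(t))\,\mathrm{d}t+\big\langle B(X(t))^{*}\nabla h(t,X(t)),\,\mathrm{d}W(t)\big\rangle,
\]
where $\mathcal{L}h=\partial_t h+\langle\nabla h,\,AX+F\rangle+\tfrac12\operatorname{Tr}\!\big[B(\cdot)QB(\cdot)^{*}\nabla^{2}h\big]$ gathers all finite-variation terms.

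For (2): by hypothesis $Z$ is a continuous $\pbb$-martingale, so uniqueness of the semimartingale decomposition forces $(\mathcal{L}h)(t,X(t))=0$ for a.e.\ $t$, almost surely; this is the infinite-dimensional counterpart of the space-time harmonicity imposed by hand on $h$ in \cref{sec: finite doob}. Hence $\mathrm{d}Z(t)=\langle B(X(t))^{*}\nabla h(t,X(t)),\mathrm{d}W(t)\rangle$, and since $Z>0$ with $Z=h(t,X(t))$, dividing by $Z$ yields $\mathrm{d}Z(t)=Z(t)\,\langle B(X(t))^{*}\nabla\log h(t,X(t)),\mathrm{d}W(t)\rangle$ with $Z(0)=1$. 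The real stochastic integral $M(t):=\int_0^t\langle B(X(s))^{*}\nabla\log h(s,X(s)),\mathrm{d}W(s)\rangle$ is well defined because $Z$, being a continuous martingale, has finite quadratic variation and is pathwise bounded away from $0$ on $[0,T]$; by uniqueness of solutions to the linear equation $\mathrm{d}Z=Z\,\mathrm{d}M$, $Z(0)=1$, we conclude $Z=\mathcal{E}(M)$ is the stochastic exponential of $M$.

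For (3)--(4): since $Z(T)>0$ and $\ebb[Z(T)]=1$, $\widehat{\pbb}:=Z(T)\,\pbb$ is a probability measure equivalent to $\pbb$ on $\fcal_T$ with density process $Z$. Applying the Girsanov theorem for $Q$-Wiener processes in Hilbert space (\eg \citet[Thm.~10.14]{da_prato_stochastic_2014}) with the exponential martingale $Z=\mathcal{E}(M)$, the process $\widehat{W}(t):=W(t)-\int_0^t B(X(s))^{*}\nabla\log h(s,X(s))\,\mathrm{d}s$ is a $Q$-Wiener process under $\widehat{\pbb}$. Substituting $\mathrm{d}W(s)=\mathrm{d}\widehat{W}(s)+B(X(s))^{*}\nabla\log h(s,X(s))\,\mathrm{d}s$ into \cref{eq: sde} and collecting terms gives the asserted SDE, the new drift contribution being $B(X(s))B(X(s))^{*}\nabla\log h(s,X(s))$.

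The algebra is routine; the work is in the infinite-dimensional analysis. First, I would need to check carefully that the quoted Itô formula applies: the unbounded $A$ is harmless because it sits inside the strong-solution drift, but one still needs enough regularity/boundedness of the derivatives of $h$ on bounded sets (plausibly after a localisation/stopping argument), and $B(X(\cdot))Q^{1/2}$ must be Hilbert-Schmidt-valued, which follows from $B\in\mathrm{HS}(Q^{1/2}(U),H)$. Second, and I expect this to be the main obstacle, one must reconcile the conventions of the Hilbert-space Girsanov theorem so that the drift correction reads $BB^{*}\nabla\log h$ and not $BQB^{*}\nabla\log h$; this rests on taking $B(X(s))^{*}$ to be the adjoint with respect to the $Q^{1/2}(U)$ inner product, consistently with the definition of $\mathrm{HS}(Q^{1/2}(U),H)$ used in \cref{eq: sde}. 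No Novikov-type condition is required, since we \emph{assume} from the outset that $Z$ is a true martingale with $\ebb[Z(T)]=1$.
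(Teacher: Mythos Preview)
Your proposal is correct and follows essentially the same route as the paper: apply the infinite-dimensional It\^o formula to $h(t,X(t))$, use the martingale hypothesis to kill the finite-variation part, recognise $Z$ as the Dol\'eans--Dade exponential of $M(t)=\int_0^t\langle B(X(s))^*\nabla\log h(s,X(s)),\mathrm{d}W(s)\rangle_{Q^{1/2}(U)}$, and then apply the Hilbert-space Girsanov theorem and substitute $\mathrm{d}W=\mathrm{d}\widehat{W}+\psi\,\mathrm{d}s$ back into \cref{eq: sde}. Your remark about reading $B^*$ as the adjoint relative to the $Q^{1/2}(U)$ inner product is exactly the convention the paper uses, which is why the drift correction is $BB^*\nabla\log h$ rather than $BQB^*\nabla\log h$.
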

\begin{proof}
First, we show that $Z(t) \coloneqq h(t, X(t))$ defines a continuous martingale and apply an infinite-dimensional Itô's theorem followed by Doléans exponential to rewrite $Z$. We then may apply the infinite-dimensional Girsanov's theorem, and rewrite the original SDE in terms of the resulting Wiener process $\widehat{W}$. See \cref{proof: infinite doob's} for full details.
\end{proof}
% \textcolor{red}{expand...what does it mean to write $X(t)$ with respect to the new measure?}

\subsection{Defining the transforms}

Previously, we showed that given a function $h$ satisfying certain conditions, we can use this to weight the probability measure, giving us a new conditioned probability measure.
Now, we address which $h$ functions to use and the properties of the resulting processes. 
In infinite dimensions, there is no measure that satisfies all properties of the usual finite-dimensional Lebesgue measure and so it does not make sense to consider transition densities. 
However, transition operators of the form $P_tf(\xi) = \ebb[f(X(t, \xi))]$ exist and satisfy the Markov property in \cref{eq: markov property}, so we opt to use these instead.
For a strictly positive, bounded Borel function $\psi: H \to \rbb$, we take functions of the form
\begin{align}\label{eqn: h function}
h(t, \xi) = C_T\cdot{\mathbb{E}[\psi(X(T-t, \xi))]}.
\end{align}
where $C_T= 1/{\mathbb{E}[\psi(X(T, \xi_0)]}$ is a normalising constant.
Consider, for example, when the function $\psi$ is the Dirac delta function of some set $A$ with non-zero measure.
Then $h(t,\xi)$ is the probability that at the end time, the process will be in the set $\Gamma$, given that at the current time, the process is equal to $\xi$.
Functions of the form \Cref{eqn: h function} satisfy some of the necessary assumptions on $h$ for \Cref{thm: infinite doob's}:
they are positive martingales, with $Z(0)=1$ and $\ebb[Z(T)]=1$.
\begin{lemma}\label{h martingale}
Let $X$ be as in \cref{eq: sde}, satisfying \cref{a: unique strong solution}. Given a function $h:[0, T]\times H \to \rbb$ satisfying \Cref{eqn: h function}, $Z(t)\coloneqq h(t, X(t, \xi_0))$ is a strictly positive martingale such that $Z(0)=1$ and $Z(T)=C_T\cdot\psi(X(T, \xi_0))$.
\end{lemma}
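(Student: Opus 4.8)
The plan is to check the four assertions separately; only the martingale property uses the Markov structure in an essential way. The endpoint identities are immediate from $X(0,\eta)=\eta$: evaluating \cref{eqn: h function} at $t=0$, $\xi=\xi_0$ gives $Z(0)=h(0,\xi_0)=C_T\,\ebb[\psi(X(T,\xi_0))]=1$ by the definition of $C_T$, while at $t=T$ we get $h(T,\zeta)=C_T\,\ebb[\psi(X(0,\zeta))]=C_T\psi(\zeta)$, hence $Z(T)=C_T\psi(X(T,\xi_0))$. Strict positivity is equally direct: $\psi$ is strictly positive and bounded, so $0<\ebb[\psi(X(T-t,\xi))]\le\|\psi\|_\infty<\infty$ for every $(t,\xi)$ and, by the same reasoning applied at $t=0$, $C_T>0$; hence $Z(t)>0$ almost surely. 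Boundedness of $\psi$ also makes $Z$ a bounded, hence integrable, process, and $Z(t)$ is $\fcal_t$-measurable since $X(t,\xi_0)$ is and $\xi\mapsto h(t,\xi)$ is Borel.

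For the martingale property, fix $0\le s\le t\le T$. It is convenient to use the transition operators $P_r\psi(\zeta)\coloneqq\ebb[\psi(X(r,\zeta))]$, in terms of which \cref{eqn: h function} reads $h(r,\zeta)=C_T\,(P_{T-r}\psi)(\zeta)$. Applying the Markov property \cref{eq: markov property} with $\psi$ replaced by the bounded Borel function $h(t,\cdot)$ gives
\begin{align*}
\ebb[Z(t)\mid\fcal_s]=\ebb[h(t,X(t,\xi_0))\mid\fcal_s]=g(X(s,\xi_0)),\qquad g(\eta)\coloneqq\ebb[h(t,X(t-s,\eta))],
\end{align*}
where $g(X(s,\xi_0))$ denotes the deterministic function $g$ evaluated at the $\fcal_s$-measurable random point $X(s,\xi_0)$. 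Substituting $h(t,\cdot)=C_T\,P_{T-t}\psi$ and invoking the Chapman--Kolmogorov semigroup identity $P_{t-s}P_{T-t}=P_{T-s}$ (itself the Markov property applied once more) yields
\begin{align*}
g(\eta)=C_T\,\ebb\big[(P_{T-t}\psi)(X(t-s,\eta))\big]=C_T\,(P_{t-s}P_{T-t}\psi)(\eta)=C_T\,(P_{T-s}\psi)(\eta)=h(s,\eta).
\end{align*}
Therefore $\ebb[Z(t)\mid\fcal_s]=h(s,X(s,\xi_0))=Z(s)$, so $Z$ is a martingale; in particular $\ebb[Z(T)]=\ebb[Z(0)]=1$.

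The only genuine subtlety is making the two uses of the Markov property rigorous in infinite dimensions: one needs that $\eta\mapsto(P_r\psi)(\eta)$ is a bounded Borel function for every $r\in[0,T]$ (so that it is a legitimate argument for the next transition operator) and that \cref{eq: markov property} may be read with the ``frozen independent copy'' interpretation used above. Both follow from \cref{a: unique strong solution}, which supplies a strong Markov solution with measurable dependence on the initial datum, together with the boundedness of $\psi$ and a routine monotone-class argument; everything else — integrability, adaptedness, and the semigroup composition — is standard and I would not dwell on it.
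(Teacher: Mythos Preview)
Your proof is correct and follows essentially the same route as the paper. Both arguments hinge on the Markov property and the tower law: the paper observes directly that $Z(t)=\ebb[C_T\psi(X(T,\xi_0))\mid\fcal_t]$ and then applies the tower property, while you unpack this into a Chapman--Kolmogorov step $P_{t-s}P_{T-t}=P_{T-s}$, which is the same content written in semigroup notation. Your version is more careful about the auxiliary points (measurability of $P_r\psi$, integrability, adaptedness) that the paper leaves implicit, but the substance is identical.
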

\begin{proof}
Apply the tower property and check the assertions hold. See \cref{proof: h martingale} for full details. 
\end{proof}
The other necessary condition on $h$ is differentiability.
For this, we consider the specific functions with different $\psi$'s separately for the exact and inexact matching cases.

\subsubsection{Exact matching}\label{sec: exact matching}
Let $\Gamma \in H$ be Borel measurable, and suppose that \Cref{a: twice diff transition fn} holds. For some instances where this holds, see \citet[Chapter 9]{da_prato_stochastic_2014}, or \cref{sec: discretisation}. Then we define
\begin{align}
    h(t, \xi) \coloneqq \ebb[\delta_\Gamma(X(T-t, \xi))] / \ebb[\delta_\Gamma(X(T, \xi_0))].
\end{align}
The proof that this $h$-transform does solve \cref{p: exact} is similar to the finite-dimensional version.
Note that $h(T, X(T)) = \delta_\Gamma(X(T, \xi_0))/\ebb[\delta_\Gamma(X(T, \xi_0))].$
Hence, for some random variable $Y$ we see:
% \begin{subequations}
\begin{align}
\widehat{\mathbb{E}}[Y] &= \int_\Omega Y\mathrm{d}\widehat{\mathbb{P}}
={\mathbb{P}}(\{X(T, \xi_0) \in \Gamma\}) ^{-1}\int_{\{X(T) \in \Gamma\}}
Y \mathrm{d}\mathbb{P}
=\mathbb{E}[Y\mid X(T, \xi_0) \in \Gamma].
\end{align}
% \end{subequations}

\subsubsection{Inexact matching}\label{sec: inexact matching}

In addition to the exact matching problem, we can solve the inexact matching problem by conditioning the process such that, at the end time, it approximates some behaviour.
This has two advantages.
Firstly, we do not need \Cref{a: twice diff transition fn} and instead use \Cref{a: twice diff solution}.
\Cref{a: twice diff solution} is satisfied when $F$ and $B$ satisfy Fréchet differentiable conditions.
The second advantage is that this allows us to account for observation noise in models.
We condition on the Gaussian distance between the endpoint and some target point or observation
by defining a function $\psi: H \to \rbb$ that is twice Fréchet differentiable.
Then under \cref{a: unique strong solution,a: twice diff solution} \ie $X(t)$ is a strong solution, a Markov process and is twice differentiable with respect to the initial value,
the function $h(t, \xi)\coloneqq \ebb[\psi(X(T-t, \xi))]$ will satisfy the necessary conditions given at the start of \Cref{s: doob's method}:

\begin{lemma}\label{lemma: twice diff}
Let $\psi: H \to \rbb$ be a continuous function, twice Fréchet differentiable, with continuous derivatives.
Then $h(t, \xi)\coloneqq \ebb[\psi(X(T-t, \xi))]$ is twice Fréchet differentiable in $\xi$ and once differentiable in $t$, with continuous derivatives.
\end{lemma}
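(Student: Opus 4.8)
The plan is to separate the spatial from the temporal regularity; both parts are ``differentiation under the expectation'' arguments that use \Cref{a: twice diff solution} to move derivatives onto the flow $\xi\mapsto X(T-t,\xi)$ and the chain rule to combine them with the derivatives of $\psi$. Fix $t$ and write $h(t,\xi)=\ebb[\psi(X(T-t,\xi))]$. By \Cref{a: twice diff solution}, $\pbb$-a.s.\ the map $\xi\mapsto X(T-t,\xi)$ is twice Fr\'echet differentiable with derivatives continuous in $(t,\xi)$, and since $\psi\in C^2(H)$ the chain rule makes $\xi\mapsto\psi(X(T-t,\xi))$ twice Fr\'echet differentiable a.s., with first derivative $D\psi(X(T-t,\xi))\circ D_\xi X(T-t,\xi)$ and second derivative a bilinear expression in $D^2\psi(X(T-t,\xi))$, $D_\xi X(T-t,\xi)$ and $D^2_\xi X(T-t,\xi)$. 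To pass these derivatives through $\ebb[\cdot]$ I would use the Banach-space Leibniz rule: if $g(\cdot,\omega)$ is a.s.\ Fr\'echet differentiable, $\xi\mapsto D_\xi g(\xi,\omega)$ is a.s.\ continuous, and $\sup_{\|\eta-\xi\|\le r}\|D_\xi g(\eta,\omega)\|$ is integrable for some $r>0$, then $\ebb[g(\cdot,\omega)]$ is Fr\'echet differentiable with derivative $\ebb[D_\xi g(\cdot,\omega)]$; apply it once for $D_\xi h$ and again for $D^2_\xi h$. The domination hypothesis is where the standing assumptions enter: using boundedness of $\psi,D\psi,D^2\psi$ (as for the Gaussian $\psi$ of the inexact problem, or a growth bound together with the moment estimate $\ebb\sup_{s\le T}\|X(s,\xi)\|^p<\infty$), it reduces to local integrability of $\sup\|D_\xi X(T-t,\cdot)\|$ and $\sup\|D^2_\xi X(T-t,\cdot)\|$, which follows from the $L^p$-bounds on the first and second variation processes of \cref{eq: sde} recorded in \citet[Part II]{da_prato_stochastic_2014}. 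Continuity of $\xi\mapsto D_\xi h$ and $D^2_\xi h$, and their joint continuity in $(t,\xi)$, then follow from the a.s.\ continuity of the integrands and dominated convergence, the dominating functions being chosen locally uniformly in $t$.

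For the $t$-derivative I would use that $X$ is time-homogeneous, so $h(t,\xi)=u(T-t,\xi)$ with $u(s,\xi)\coloneqq\ebb[\psi(X(s,\xi))]$, and the Markov property \cref{eq: markov property}, which for $\varepsilon>0$ gives $u(s+\varepsilon,\xi)-u(s,\xi)=\ebb[\,u(s,X(\varepsilon,\xi))-u(s,\xi)\,]$. Since $u(s,\cdot)\in C^2(H)$ by the previous paragraph, applying the infinite-dimensional It\^o formula to $\zeta\mapsto u(s,\zeta)$ along the solution on $[0,\varepsilon]$, taking expectations (which annihilates the stochastic integral), dividing by $\varepsilon$ and letting $\varepsilon\downarrow0$ yields
\begin{align*}
\partial_s u(s,\xi)=\langle A\xi+F(\xi),\,Du(s,\xi)\rangle+\tfrac12\operatorname{Tr}\!\big[B(\xi)QB(\xi)^*\,D^2u(s,\xi)\big],
\end{align*}
so $\partial_t h(t,\xi)=-\partial_s u(T-t,\xi)$ exists and is continuous in $(t,\xi)$ by joint continuity of $Du,D^2u$ together with the regularity of the coefficients $A,F,B,Q$.

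The main obstacle is the unbounded generator $A$: $\langle A\xi,Du(s,\xi)\rangle$ only makes literal sense for $\xi\in D(A)$, and the It\^o formula appropriate for \emph{mild} solutions of \cref{eq: sde} is not the naive one. I would handle this either with the mild It\^o formula (see \citet{da_prato_stochastic_2014}), which rewrites the drift contribution without applying $A$ pointwise, or by first establishing the identity for $\xi\in D(A)$ and extending to all of $H$ by density of $D(A)$ together with the continuity of both sides already proved. Verifying the domination hypotheses of the Leibniz rule through the variation-process moment bounds is the other piece of genuine work, but it is routine given the Lipschitz and linear-growth conditions that already underlie \Cref{a: unique strong solution} and \Cref{a: twice diff solution}.
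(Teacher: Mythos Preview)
Your argument for the spatial regularity is essentially the paper's: chain rule on $\psi\circ X(T-t,\cdot)$ using \Cref{a: twice diff solution}, then dominated convergence to pass derivatives through the expectation. The paper is terser on the domination step, but the idea is the same.

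For the $t$-derivative you take a genuinely different route. You use the semigroup identity $u(s+\varepsilon,\xi)=\ebb[u(s,X(\varepsilon,\xi))]$ and then apply It\^o to the already-established $C^2$ function $u(s,\cdot)$ along the short trajectory $[0,\varepsilon]$, arriving at the backward Kolmogorov equation $\partial_s u=\langle A\xi+F(\xi),Du\rangle+\tfrac12\operatorname{Tr}[\cdots D^2u]$. The paper instead applies It\^o once to $\psi$ itself along the full trajectory, takes expectations to obtain an integral representation $g(t,\xi)=\psi(\xi)+\ebb\int_0^t[\cdots]\,\mathrm{d}s$, and then differentiates that integral in $t$ by the fundamental theorem of calculus. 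The paper's route is shorter and avoids feeding the output $u$ back into It\^o; your route yields the PDE for $u$ explicitly, which is informative but not needed for the lemma. Both are valid.

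Your caveat about the unbounded $A$ is well taken in general, but note that the paper's standing hypotheses already require a \emph{strong} solution (\Cref{a: unique strong solution}), so $AX(s,\xi)$ is pathwise meaningful and the It\^o formula of \Cref{thm: Ito} applies directly; the mild-solution detour is unnecessary here. The same remark resolves the analogous issue in the paper's own computation, where $AX(s,\xi)$ appears inside the time integral.
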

Apply Itô's formula to $h$ and use properties of expectation to differentiate. See \cref{proof: twice diff} for the details.

One such function satisfying \cref{lemma: twice diff} is the Gaussian kernel function $k: H \times H \to \rbb$,
\begin{align}
k_\sigma (V, X) = \frac{1}{\sqrt{2\pi\sigma}}\exp
\left(
- \frac{1}{2\sigma}\|X - V\|_H^2
\right).
\end{align}
Here, we fix an observation $V\in H$ and parameter $\sigma \in \rbb$ and vary $X \in H$.
The function $k$ is twice Fréchet differentiable in each argument, with continuous derivatives; hence the function $h(t, \xi) =\ebb[k_{\sigma}(V, X(t, \xi))]$ satisfies the requirements of \Cref{lemma: twice diff}.
Moreover, this gives us a method of including observation noise in our model.
In finite dimensions, to model inexact matching for a stochastic process $x(t) \in \rbb^d$, one can take the function
\begin{align}
h(t, x) \coloneqq \int f_d(v; y, \Sigma) p(t, x; T, y)\mathrm{d}y,
\end{align}
where $v \in \rbb^d$ is a target value, $p(t, x; T, y)$ is the transition density of the $\rbb^d$-valued stochastic process and $f_d(\cdot; \mu, \Sigma)$ is the density of the normal distribution on $\mathbb{R}^d$ with mean $\mu \in \rbb^d$ and covariance $\Sigma \in \rbb^{d \times d}$. See \citet[Section 3.1]{arnaudon_diffusion_2022} for more details.
To compare, for $f_1(\cdot ; 0, \sigma)$ the density of the one-dimensional normal distribution with mean $0$ and variance $\sigma \in \rbb$, and $P(T-t, \xi, \Gamma) = \pbb[X(T)\in \Gamma \mid X(t)=\xi]$, we set
\begin{align}
h(t, \xi) \coloneqq \ebb[k_{\sigma}(V, X(t, \xi))]
= \int_H f(\|\gamma-V\|^2_H; 0, \sigma) P(T-t, \xi, \mathrm{d}\gamma).
\end{align}

Defining the function in this way means we condition on a distance between $X(T)$ and our observation. It also means we can change the distance function to another similarity measure.
For example, when the functions represent shapes, we could use another norm that measures the dissimilarity of shapes as in \citet[Chapter 12]{pennec_riemannian_2020}.

\subsection{Sampling from infinite-dimensional $h$-transforms}\label{sec: discretisation}
Thus far, we have shown that in infinite dimensions, we can condition stochastic processes either exactly or inexactly, and the conditioned process has form \cref{eq: conditioned sde}. We now turn our attention to sampling from these conditioned processes. For this we discuss how to discretise.

For an orthonormal basis $\{e_i\}_{i=1}^\infty$ of a separable Hilbert space $H$, let $H^N = \text{span}(\{e_i\}_{i=1}^N)\subset H$. 
Let $X(t, \xi)$ be a strong solution to \cref{eq: sde}, with $X(0)=\xi$. 
Since strong solutions are also weak solutions \cite[Chapter 6.1]{da_prato_stochastic_2014}, we can write $X(t, \xi)$ as a sum of finite dimensional SDEs, with each finite SDE satisfying
\begin{align}\label{eq: discretised SDE basis}
    \langle X(t), e_i \rangle =
    \langle \xi, e_i\rangle 
    + \int_0^t \langle AX(s) + f(X(s)), e_i\rangle \mathrm{d}s
    + \int_0^t\langle e_i, B(X(s))\mathrm{d}W(s)\rangle.
\end{align}
Using \cref{eq: discretised SDE basis}, we can define an SDE $X^N$ as $X^N_i(t) \coloneqq \langle X(t), e_i \rangle$, where $X^N_i$ is the $i^\text{th}$ component of $X^N \in \rbb^N$. For finite dimensional sets $\Gamma_i \subset \rbb$ we look at the problem of conditioning on cylindrical sets of the form 
\begin{align}\label{eq: set A_N}
\Gamma_N = \{\varphi \in H \mid {\varphi}_i \in \Gamma_i, \, 1\leq i \leq N\},
\end{align}
for $\phi_i = \langle \phi, e_i \rangle$.
\begin{lemma}\label{lem: discretisation}
Let $\Gamma_N$ be as in \cref{eq: set A_N} and $h:[0, T] \times H^N \to \rbb$ be defined by $h(t, Y) \coloneq \ebb[\delta_{\Gamma_N} (X(T-t, Y))]$. 
Moreover, define $g: [0, T] \times \rbb^N \to \rbb$ by $g(t, y) \coloneq \ebb[\prod_{i=1}^N \delta_{\Gamma_i}(X_i^N(T-t, y))]$. Then $\langle\nabla \log h(t, Y), e_i\rangle = [\nabla \log g(t, (Y_i)_{i=1}^N)]_i$.
\end{lemma}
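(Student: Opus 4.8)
The plan is to show that the infinite-dimensional $h$-function, evaluated at points of $H^N$, is literally the finite-dimensional $g$-function expressed in coordinates, and then differentiate. First I would observe that if $Y \in H^N$ then the solution $X(t, Y)$ decomposes along the basis as $X(t,Y) = \sum_i X_i^N(t, (Y_j)_{j=1}^N)\, e_i + (\text{higher modes})$, where $X_i^N(t, y) = \langle X(t,Y), e_i\rangle$ solves the coupled finite system in \cref{eq: discretised SDE basis}. The key point is that the indicator $\delta_{\Gamma_N}$ of the cylinder set $\Gamma_N$ only constrains the first $N$ coordinates: $\delta_{\Gamma_N}(\varphi) = \prod_{i=1}^N \delta_{\Gamma_i}(\varphi_i)$. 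Hence
\begin{align*}
h(t, Y) = \ebb\Big[\textstyle\prod_{i=1}^N \delta_{\Gamma_i}\big(X_i^N(T-t, Y)\big)\Big] = g\big(t, (Y_i)_{i=1}^N\big),
\end{align*}
i.e. $h$ restricted to $H^N$ is the pullback of $g$ under the isometry $Y \mapsto (Y_i)_{i=1}^N$ between $H^N$ and $\rbb^N$.

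Next I would push this identification through the logarithm and the gradient. Since $\log h(t, Y) = \log g(t, (Y_i)_{i=1}^N)$ on $H^N$, the Fréchet derivative of $\log h$ in the direction $e_j$ is obtained by the chain rule: perturbing $Y$ by $s e_j$ perturbs the $j$-th coordinate of the argument of $g$ by $s$ and leaves the others fixed, so
\begin{align*}
\langle \nabla \log h(t, Y), e_j\rangle = \frac{d}{ds}\Big|_{s=0} \log h(t, Y + s e_j) = \frac{d}{ds}\Big|_{s=0} \log g\big(t, (Y_i + s\delta_{ij})_{i=1}^N\big) = \big[\nabla \log g(t, (Y_i)_{i=1}^N)\big]_j,
\end{align*}
which is exactly the claimed identity. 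One must note that $\nabla \log h$ is the Riesz representative of the Fréchet derivative with respect to the $H$-inner product, and because $\{e_i\}$ is orthonormal, its $e_j$-component is just the directional derivative along $e_j$; the same holds in $\rbb^N$ with the standard basis, so the two sides match componentwise.

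The main obstacle is justifying that the manipulations with the Dirac delta $\delta_{\Gamma_N}$ are legitimate — strictly speaking $\ebb[\delta_\Gamma(X(t,\xi))]$ is shorthand for the transition operator $P(\xi, t, \Gamma)$, not an integral against an actual function, so the factorisation $\delta_{\Gamma_N} = \prod_{i=1}^N \delta_{\Gamma_i}$ and the interchange with the expectation must be read at the level of measures: $P(Y, T-t, \Gamma_N)$ equals the joint law of $(X_1^N,\dots,X_N^N)(T-t,Y)$ evaluated on the product set $\Gamma_1\times\cdots\times\Gamma_N$, which is precisely $g(t,(Y_i)_{i=1}^N)$ by the definition of the finite system. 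Once this measure-theoretic identification is in place, differentiability of $h$ on $H^N$ follows from differentiability of $g$ (guaranteed under \cref{a: twice diff transition fn}, or the reductions in this section), and the chain-rule computation above is routine. I would therefore structure the proof as: (i) the factorisation/restriction identity $h|_{H^N} = g\circ\iota$; (ii) the chain rule for the log-gradient along basis directions; (iii) a remark reconciling the $\delta_\Gamma$ notation with transition operators to make step (i) rigorous.
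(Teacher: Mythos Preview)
Your proposal is correct and follows essentially the same approach as the paper: establish the identity $h(t,Y)=g(t,(Y_i)_{i=1}^N)$ via the factorisation $\delta_{\Gamma_N}(\varphi)=\prod_{i=1}^N\delta_{\Gamma_i}(\varphi_i)$, then differentiate through the isometry $H^N\cong\rbb^N$. The paper writes the second step as a difference quotient in a general direction $\varepsilon\in H^N$ rather than your directional derivative along $e_j$, but these are equivalent; your additional remark reconciling the $\delta_\Gamma$ notation with transition operators is extra care the paper does not spell out.
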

\begin{proof}
It follows by noting that $\ebb[\delta_{\Gamma_N}(Y)] = \ebb[\prod_{i=1}^N\delta_{\Gamma_{i}} (Y_i)]$. See \cref{proof: discretisation} for details.
\end{proof}

Since $h(t, Y) = g(t, (Y_i)_{i=1}^N)$, the sets $\Gamma_N$ satisfy \cref{a: twice diff transition fn} as long as the sets $\Gamma_i$ do in finite dimensions.
We have shown that conditioning on sets only depending on the first $N$ eigenvalues is equivalent to conditioning the $N$ dimensional projection of the SDE onto the first $N$ basis elements. With this discretisation onto finite dimensions, we can adapt a finite-dimensional algorithm to sample from the finite-dimensional bridges. For this we opt for using the algorithm in \citet{heng2021simulating}, since it can be easily applied to \cref{p: exact,p: inexact} and we can scale up to higher dimensions by using a different network architecture. 
Here, they leverage the diffusion approximation (in this case, Euler-Maruyama) and score-matching techniques to first learn the time reversal of the diffusion process. 
Applying the algorithm again on the time reversal, started at the proposed end point, gives the forward-in-time diffusion bridge.

\section{Experiments}

We consider two main setups. Firstly we look at Brownian motion between shapes and use this to evaluate our method, since for Brownian motion we have a closed form solution for the score function. We then apply this to problems from the shape space literature. There, they are interested in stochastic bridges between shapes which has applications within medical imaging and evolutionary biology \citep{Gerig_Styner_Shenton_Lieberman_2001, Arnaudon_Holm_Pai_Sommer_2017, Arnaudon_Holm_Sommer_2023}. We expand on that body of work, by allowing shapes to be treated as infinite-dimensional objects when bridging, as in the non-stochastic case \citep{younes_shapes_2019}. Until now, this was impossible for stochastic shape paths, since the theory for this was missing. 

The code used for our training and experiments can be found at \url{https://github.com/libbylbaker/infsdebridge} and further details on experiments can be found in \cref{sec: figures}.

\subsection{Brownian Motion}\label{sec: bm experiments}
For Brownian motion between shapes, we look at using both discretisations via landmarks and Fourier bases, for conditioning both exactly and inexactly and compare to the true solution. 
 We train on a target shape of a circle with radius 1. For the landmark discretisation, we condition such that the landmarks of the process end at the landmarks of the target shape, and for the Fourier basis we condition such that the chosen basis elements are equal to the Fourier basis of the circle at the end time.
In \Cref{tab: bm errors} we give the mean square error for different numbers of dimensions. For the Fourier basis, we evaluate the score on 100 points, and find the error between this and the true value so that we may compare to the landmark errors.
We see that as the dimensions grow, we need a larger training time to maintain lower errors. We note that each Fourier basis, contains two parts: the real and imaginary. We train on batches of 50 SDE trajectories, with 40 batches per epoch. For training details see \cref{sec: app training}.

\begin{table}
\caption{A comparison of the trained score of the Brownian motion process.}
\centering
\begin{tabular}{ lllllll }
 \toprule
 &\multicolumn{3}{c}{Fourier (num. bases)} &
    \multicolumn{3}{c}{Landmarks (num. pts)} \\
 \midrule
 & 8 & 16 & 32 & 8 & 16 & 32\\
 \midrule
 RMSE   & 5.09   & 6.66&   10.54 & 7.95 & 6.08 & 10.79\\
 Time (s)&   105.1  & 201.8 & 949.4 & 95.9 & 104.8 & 183.0\\
 Epochs & 100 & 150 & 300 & 100 & 100 & 100\\
 \bottomrule
\end{tabular}
\label{tab: bm errors}
\end{table} 

\subsection{Experiments on shape space}

Next we turn to a concrete problem: we model the change in morphometry (shapes) of butterflies over time. Studying changes of morphometry of organisms over time is important to evolutionary biologists. For example, for butterflies, one can ask whether the change in wing shape
correlates with a change in habitat or climate. Rather than extracting finite-dimensional information from the shapes, such as height or a subset of chosen points, we apply the analysis to the entire shape, as suggested in \citet{sommer_stochastic_flows_2021}. Being able to condition between shapes is a key step in phylogenetic inference, where it will be applied to compute likelihoods of phylogenetic trees from morphological data. Until now, it was only possible for finite-dimensional extractions of the shape. Future work will consider extending our methods for parameter estimation of SDEs for phylogenetic inference. This is in order to extend the Brownian motion model of trait evolution to shapes where the SDE models the transitions over the edges in the phylogenetic tree \citep{felsenstein_phylogenies_1985}.

\subsubsection{SDEs in shape space}
For SDEs in shape space we take the SDE defined in \citet{sommer_stochastic_flows_2021}
as
\begin{align}\label{eq: inf dim shape sde}
\mathrm{d}X(t) = \int_0^t {Q}(X(t)) \mathrm{d}W(t) \qquad
({Q}h)f(x) = \int_{\rbb^2} k(h(x), y) f(y) \mathrm{d}y.
\end{align}
where for each $h\in H=L^2(\rbb^2, \rbb^2),$ ${Q}(h): H \to H$ is a Hilbert-Schmidt operator, and $k$ is a smooth kernel
$k\in L^2(\rbb^2 \times\rbb^2, \rbb^2)$.

\begin{figure*}[b]
\centering
\includegraphics[width=\textwidth]{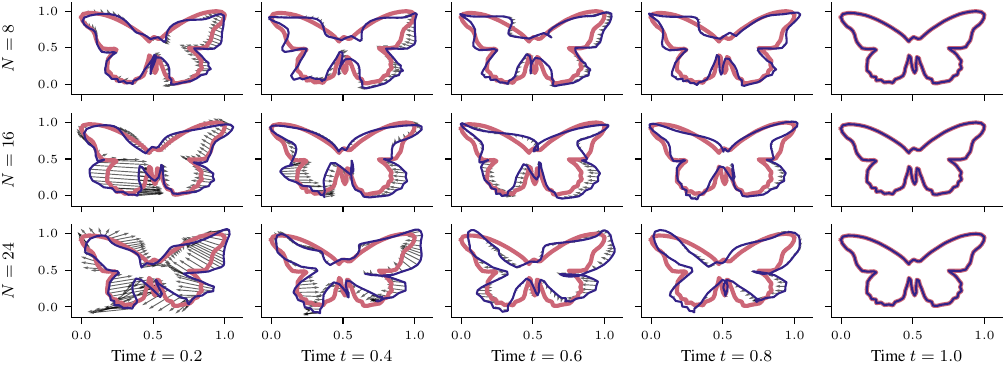}
\caption{Score fields evaluated on a selection of points at different time steps. In general, the score field is expected to ``push'' the shape towards the target. We show the learned score fields (black arrows) represented by varying numbers $N$ of base functions at different time steps, as well as the current shape (blue curves) and the target shape (red curves).}
\label{fig:score grid}
\end{figure*}

This corresponds to a stochastic flow of diffeomorphisms on $\rbb^2$, with a Brownian temporal model. For each $x\in \rbb^2$, $X(t, x)$ models the position of $x \in \rbb^2$ at time $t$, and the function $x \to X(t, x)$ is a diffeomorphism for all $t$. To see this we write this in the language of stochastic flows as defined in \citet{Kunita_1997}.
Define the martingale $F(t, x) \coloneq QW(t, x),$ where $W$ is the Wiener process on $H$ and $Q$ is defined as before.
Then, we define the stochastic flow of the martingale $F$ as
\begin{align}
p(t, x) = x + \int_0^t F(p(r, x), \mathrm{d}r),
\end{align}
where the integral is defined in \citet[Chapter 3]{Kunita_1997}.
Then by \citet[Theorem 4.6.5]{Kunita_1997} if $k$ is smooth, the map $p(t, \cdot)$ is a diffeomorphism for each $t$.
See also \citet[Chapter 0.1]{da_prato_stochastic_2014} for general details of lifts of diffusion processes to infinite dimensions.

In \cref{fig: circ processes} we plot some example trajectories of \cref{eq: inf dim shape sde} for various parameters, with a circle as the initial value. In \cref{fig:unconditioned butterfly} we plot one trajectory for a butterfly. The trajectories are calculated in terms of a Fourier basis and for \cref{fig:unconditioned butterfly} we plot the trajectories of a subset of points of the shape where we see the temporal Brownian model.

\subsubsection{Results}

\begin{figure*}[t]
    \centering
\includegraphics{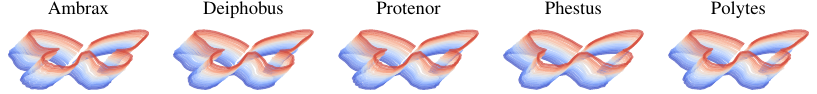}
    \caption{We use a dataset of 40 closely related butterflies with five different species. We find a mean across the dataset and plot single trajectories between the mean at time $t=0$ (in blue) and a specimen from each species at time $t=1$ (in red).}
    \label{fig: mean butterfly}
\end{figure*}

We illustrate our method on butterfly data.
To demonstrate, we first use two butterflies with somewhat different shapes \citep{gbif_3_butterflies}. One trajectory between the two butterflies is plotted in \cref{fig:bridge butterfly}. In this, we can see the high correlation between neighbouring points, with a Brownian temporal model. In \cref{fig: error plot}, we plot 120 butterfly trajectories, at specific time points. For $t=0.2$ we see that the butterfly outlines are mostly close to the start butterfly in pink, and at time $t=0.8$, they are closer to the green target butterfly.
In \cref{sec: figures}, we also plot the score function over time for varying numbers of basis elements \cref{fig:score grid}.

For the next experiment, we take fifty butterfly specimens across five closely related species from the Papilio family (see \cref{fig:butterfly img and landmarks}) \citep{kawahara2023global}. 
The butterflies are aligned via Procrustes, and a mean consensus shape is obtained using Geomorph \citep{adams2013geomorph}. 
More details of the butterflies and their processing are in \cref{sec: butterfly processing}. 
We train our model on the mean of the butterflies to learn the time reversal from any given input, to hit the distribution at time $T=1.0$, which we plot in \cref{fig: mean butterfly}.

\section{Conclusion, limitations and future work}

We have proved that Doob's $h$-transform can also be used in infinite-dimensions for stochastic differential equations with strong solutions. We can condition non-linear function-valued stochastic models on observations, either directly on data or by including observation noise.
The conditioned stochastic process satisfies a new differential equation involving a score function, which we can approximate using score learning. However, due to the reliance on Itô's formula, it would be hard to generalise this proof to non-strong solutions.

To learn the score, we used the architecture detailed in \cref{fig: network diag}. Although this seemed to work well for our experiments, more research could go into the network architecture which could further increase the dimensions that we consider. Furthermore, we only do the first step in \citet{heng2021simulating} and learn the time reversal since error compounds in learning the forward bridge. Future work will consider how well the time reversal approximates the forward bridge or how to learn the forward bridge directly. Moreover, as previously mentioned, learning Doob's $h$-transform is only the first step in phylogenetic inference for shapes of species. Future work will consider how to expand the infinite-dimensional bridges to inference problems.

\begin{ack}
The work presented in this article was done at the Center for Computational Evolutionary Morphometry and is partly supported by Novo Nordisk Foundation grant NNF18OC0052000, a research grant (VIL40582) from VILLUM FONDEN, and UCPH Data+ Strategy 2023 funds for interdisciplinary research.
\end{ack}

\bibliography{references}

%%%%%%%%%%%%%%%%%%%%%%%%%%%%%%%%%%%%%%%%%%%%%%%%%%%%%%%%%%%%

\appendix

\section{Butterfly Processing}\label{sec: butterfly processing}

The Lepidoptera images originate from five closely related species within the genus \textit{Papilio} from the Papilionidae family \citep{kawahara2023global}. 
The images are obtained through gbif.org \citep{gbifbutterfly2023_01_02}, filtering within \textit{preserved material} from museum collections. The images are segmented with the Python packages \textit{Segment Anything} \citep{kirillov2023segment} and \textit{Grounding Dino} \citep{liu2023grounding}.
\begin{figure*}[t]
\centering
\includegraphics[width=\textwidth]{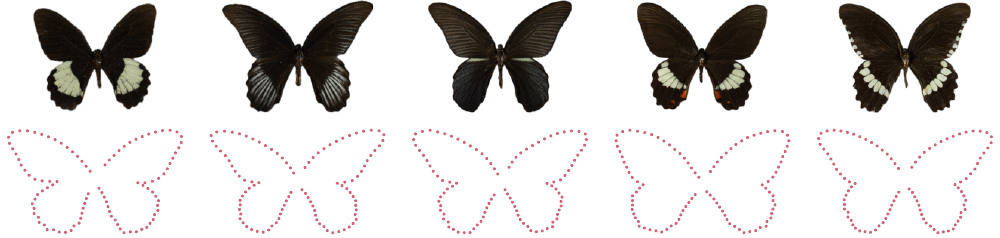}
\caption{The five closely related species of \textit{Papilio}, from left to right; \textit{Papilio Ambrax}, \textit{Papilio Deiphobus}, \textit{Papilio Protenor}, \textit{Papilio Phestus} and \textit{Papilio Polytes}. A subset of the landmarks for each specimen is shown underneath each corresponding image.}
\label{fig:butterfly img and landmarks}
\end{figure*}
The thorax's contour is removed from the outline by localising the horizontal local minimum in the outline on both the top and bottom sides of the thorax, corresponding to four anatomical landmarks where the wing is mounted to the thorax. The separation landmark of the fore and hind wings is set by identifying the vertical valley of the outline on both the left and right sides. The landmarks are used to place 250 evenly spaced semi-landmarks by interpolating the segmentation outline for each wing—images with incorrectly placed landmarks, specimens with broken wings, or abnormalities are manually removed during the process.

Eight random images were drawn for each of the five species. In total, 40 sets of 1000 landmarks were aligned using Procrustes alignment. The alignment and the mean consensus shape were obtained by using the R package \textit{Geomorph v. 4.06} \citep{adams2013geomorph}. See \cref{fig:butterfly img and landmarks} for examples of the five species of butterflies and a subsample of the aligned landmarks.

\section{Experiment details and further figures}\label{sec: figures}

\subsection{Score Learning}\label{sec: app training}

Given an SDE discretised over the first $N$ coefficients of a basis function, we learn the score function.
To do this, we use the algorithm presented in \citet{heng2021simulating} for finding the score function of a finite-dimensional Markov process $x(t)$ with transition function $p(x(t) \mid x(0)), 0\leq t \leq T$,
given by $\nabla \log p(x(T) \mid x(t))$.
Here, they leverage the diffusion approximation (in this case, Euler-Maruyama) and score-matching techniques to first learn the time reversal of the diffusion process. Applying the algorithm again on the time reversal, started at the proposed end point, gives the forward-in-time diffusion bridge.
\begin{wrapfigure}{l}{0.4\textwidth}
\includegraphics[width=0.4\textwidth]{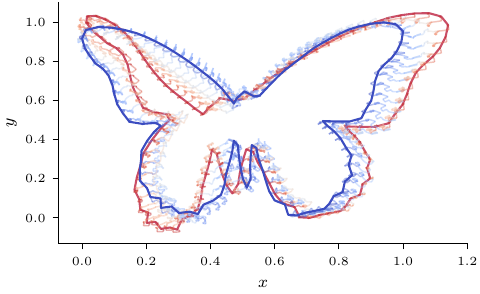}
\caption{Sample from the SDE of \Cref{eq: inf dim shape sde}.}
\label{fig:unconditioned butterfly}
\end{wrapfigure}

We found that the neural network architecture outlined in \citet{heng2021simulating} did not scale well to learning higher dimensional SDEs.
We therefore used a different structure (see \cref{fig: network diag}). 
We use a U-net \citep{ronneberger2015u} structure with skip connections in the form of fully connected layers to scale up the network's capacity.
The time step information is encoded by the well-known sinusoidal embedding proposed in \citet{vaswani2017attention} and added element-wise to the outputs of the fully connected layers. 
Finally, the real denoising score matching loss function proposed in \citet{heng2021simulating} is computed and the stochastic gradient descent is used to update the network parameters. 

The exact specification we used for different bases or points is given in \cref{tab: training dims}. We used the Adam optimiser for the training, with a starting learning rate of 0.0001 and 500 warmup steps. After
warming up, the learning rate decreases cosinely until it reaches 1e-6. All the training and evaluation computations were done
with one NVIDIA RTX 4090 GPU and one Intel(R) Xeon(R) CPU E5-2650 v4 @ 2.20GHz.

\begin{table}
    \caption{Training configurations for learning scores with different numbers of bases}
    \centering
   \begin{tabular}{ccccccc}
   \toprule
   Num. bases & \makecell[c]{Input/output \\dims} & \makecell[c]{Time embedding \\dims} & \makecell[c]{Downsampling \\dims} & \makecell[c]{Upsampling \\dims} & \makecell[c]{Activation} \\
   \midrule
   % 4 & 16 & 16 & [32, 16, 8, 4] & [4, 8, 16, 32] & silu  \\
   8 & 32 & 32 & [64, 32, 16, 8] & [8, 16, 32, 64] & silu \\
   16 & 64 & 64 & [128, 64, 32, 16] & [16, 32, 64, 128] & silu \\
   32 & 128 & 128 & [256, 128, 64, 32] & [32, 64, 128, 256] & silu \\
   \bottomrule
\end{tabular}
    \label{tab: training dims}
\end{table}

\subsection{Shape Spaces}

\subsubsection{Discretisation}

We look at both discretisations in terms of the Fourier basis, and in terms of points. For points we take the discretisation 
\begin{align}
\mathrm{d}x_i(t) = x_{0, i} + \sum_{y \in \mathcal{G}} k(x_i(t), y) \delta(y) \mathrm{d}w_y(t),
\end{align}
where $\mathcal{G}$ is a set of grid point in $\rbb^2$, and $w_y(t) \in \rbb^2$ a Wiener process.
For the Fourier basis, writing the SDE in \cref{eq: inf dim shape sde} into basis elements, gives
\begin{align}\label{eq: discretised sde}
\mathrm{d}X(t) = \sum_{n=1}^\infty \sum_{l, m=1}^\infty \langle e_n, {Q}(X(t))(g_{l, m})\rangle \mathrm{d}w_{l, m}(t) e_n,
\end{align}
where $e_n(x) = e^{inx}$ and $g_{l, m}(x_1, x_2) = e^{ilx_1 + imx_2}$.
Then, we truncate the bases to $n\leq N$ and $l, m \leq M$ elements.
The values in the sum can be approximated as follows:
\begin{align}
\langle e_n, \tilde{Q}(X(t))(g_{l, m})(x)\rangle
% =&\frac{1}{2\pi}\int_{-\pi}^{\pi} e^{inx} \int_{\rbb^2} k(X(t)(x) + s_0(x), y) g_{l, m}(y)\mathrm{d}y \mathrm{d}x\\
\approx \frac{1}{2\pi} \sum_{x \in \mathcal{G}_1} e^{inx}\sum_{y \in \mathcal{G}_2} k(X(t)(x), y) g_{l, m}(y) \Delta y \Delta x,
\end{align}
where $\mathcal{G}_1$ and $\mathcal{G}_2$ are grids over $[-\pi, \pi]$ and $D\subset \rbb^2$.
The inner sum can be computed as a fast Fourier transform, and the outer as a 2-dimensional, inverse fast Fourier transform. For the function $k$, we use the Gaussian kernel, with varying values for the covariance. In \cref{sec: figures}, we apply this SDE, with various parameters, to a circle embedded into $\rbb^2$ in \cref{fig: circ processes}, and to a butterfly in \cref{fig:unconditioned butterfly}.

\subsubsection{Further experiments}

In \cref{fig: circ processes}, we apply the unconditioned SDE in \ref{eq: discretised sde} to a circle, embedded into $\rbb^2$. The parameter $\sigma$ is the variance of the kernel $k$.
We see that for increasing values of $\sigma$, the process becomes smoother. 
This makes sense, since for
each point $y\in \rbb^2$, we can associate a noise field. Larger values of $\sigma$ for the kernels, mean the noise fields are wider and therefore points are more highly correlated leading to smoother shapes. 

We see that increasing the number of basis elements used, initially leads to slightly noisier shapes which is to be expected, since higher basis elements contain the higher frequencies and details. However, the process seems to converge quite quickly, and there appears very little difference between $N=16$ and $N=24$ basis elements.

In \cref{fig:unconditioned butterfly}, we show the process started on a butterfly, with $\sigma=0.1$ and eight landmarks, where the evolution starts from a fixed butterfly shape (in blue) and continues until time $t=1$ (in red). 

\begin{figure*}[h]
    \centering
    \includegraphics[width=\textwidth]{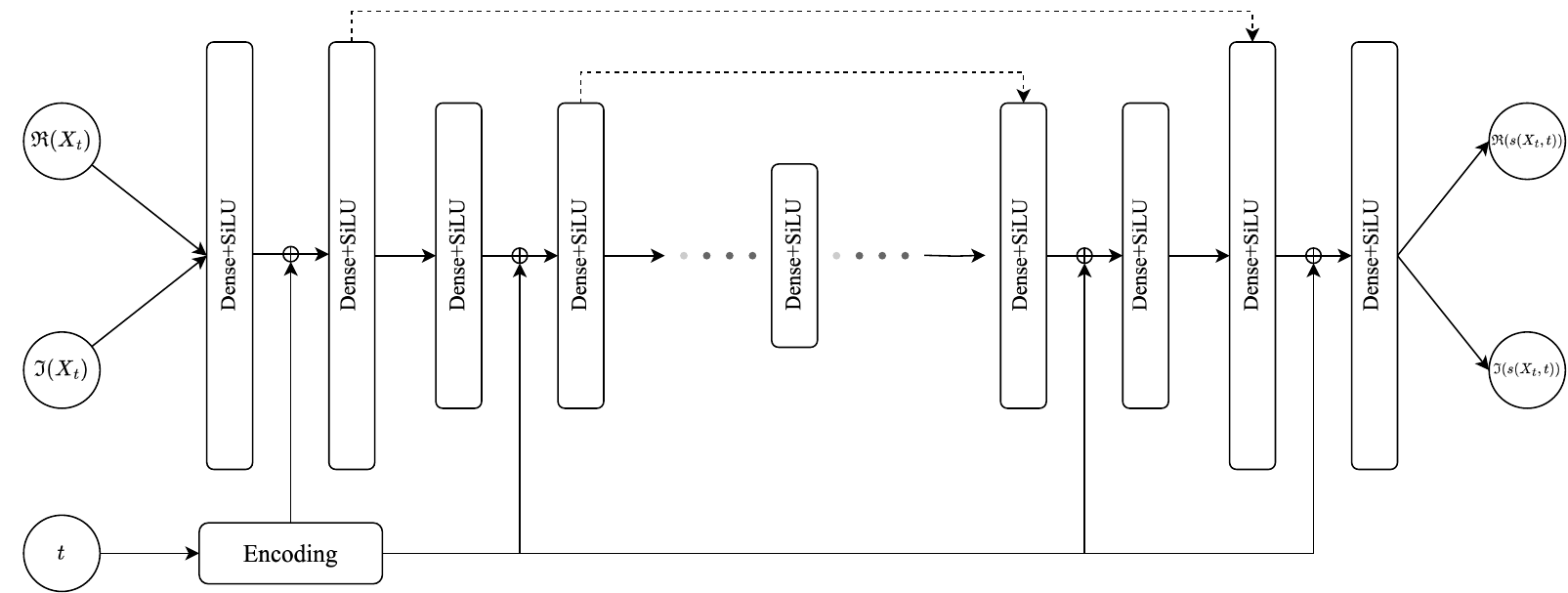}
    \caption{The neural network structure for approximating the discretised score function. A U-net architecture with skip connections (dashed lines) is used. Each layer consists of two dense layers activated by SiLU functions. Batch normalisation is applied to the end of layer (not shown). The time step $t$ is encoded using the sinusoidal embedding and added element-wise to the outputs of dense layers.}
    \label{fig: network diag}
\end{figure*}

\begin{figure}
\centering
\includegraphics{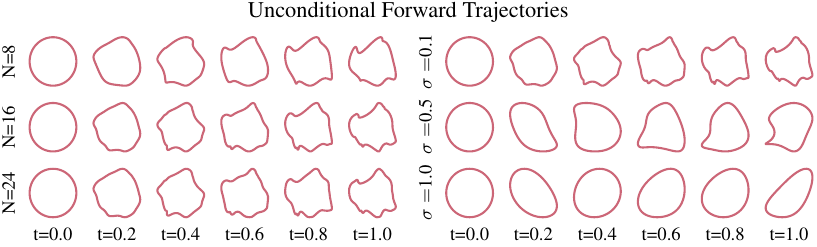}
\caption{\label{fig: circ processes} We visualise the effect of the SDE on a circle, for varying covariance of the Gaussian kernel $\sigma$, and different numbers of basis elements $N$.}
\end{figure}

\section{Proofs}

\begin{theorem}\label{proof: infinite doob's}(\cref{thm: infinite doob's} in paper)
Let $h:[0, T]\times H \to \rbb_{>0}$ be a continuous function twice Fréchet differentiable with respect to $\xi \in H$ and once differentiable with respect to $t$, with continuous derivatives. Suppose $X$ is the strong solution to the stochastic differential equation in \cref{eq: sde}.
Moreover, we assume that $Z(t)\coloneqq h(t, X(t))$ is a strictly positive martingale, with $Z(0) = 1$, and $\ebb[Z(T)]=1$.

Then $\mathrm{d}\widehat{\pbb}\coloneqq Z(T) \mathrm{d}\pbb$ defines a new probability measure. Moreover, $X$ satisfies the SDE
\begin{align}
\begin{split}
X(t) = &X(0) + \int_0^t B(X(s))B(X(s))^*\nabla \log h(s, X(s)) \mathrm{d}s \\
+ &\int_0^t [AX(s) + F(X(s))] \mathrm{d}s
+\int_0^t B(X(s))\mathrm{d}\widehat{W}(s),
\end{split}
\end{align}
where $\widehat{W}$ is the Wiener process with respect to the measure $\widehat{\pbb}$.
\end{theorem}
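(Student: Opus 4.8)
The plan is to follow the finite-dimensional template for Doob's $h$-transform but replace each ingredient by its Hilbert-space counterpart: Itô's formula in infinite dimensions, the Doléans–Dade exponential, and the infinite-dimensional Girsanov theorem. First I would observe that $\widehat{\pbb}$ is a genuine probability measure: since $Z(T) = h(T, X(T)) > 0$ by hypothesis and $\ebb[Z(T)] = 1$, the set function $\widehat{\pbb}(A) = \ebb[\mathbf{1}_A Z(T)]$ is a probability measure, and absolute continuity both ways follows from strict positivity of $Z(T)$ (so $\widehat\pbb \sim \pbb$). Along the way I would record that $Z(t) = \ebb[Z(T)\mid\fcal_t]$ since $Z$ is assumed to be a martingale with the correct terminal value.

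Next I would apply the infinite-dimensional Itô formula (see \citet{da_prato_stochastic_2014}) to $Z(t) = h(t, X(t))$, using \Cref{a: twice diff solution}-type regularity of $h$ (continuous $\partial_t h$, continuous first and second Fréchet derivatives in $\xi$) and the fact that $X$ is a strong solution to \cref{eq: sde}. This yields
\begin{align*}
\mathrm{d}Z(t) = \Big(\partial_t h + \langle \nabla h, AX + F(X)\rangle + \tfrac12 \operatorname{Tr}\big[B(X) Q B(X)^* \nabla^2 h\big]\Big)\mathrm{d}t + \langle \nabla h(t, X(t)), B(X(t))\,\mathrm{d}W(t)\rangle.
\end{align*}
Because $Z$ is a martingale, the finite-variation (drift) part must vanish $\pbb$-a.s.\ — this is the infinite-dimensional Kolmogorov backward identity satisfied by $h$ — so $\mathrm{d}Z(t) = \langle \nabla h(t, X(t)), B(X(t))\,\mathrm{d}W(t)\rangle$. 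Dividing by $Z(t) = h(t, X(t)) > 0$ and writing $\nabla \log h = \nabla h / h$ gives $\mathrm{d}Z(t) = Z(t)\,\langle \nabla \log h(t, X(t)), B(X(t))\,\mathrm{d}W(t)\rangle$, so $Z$ is the stochastic (Doléans) exponential of the local martingale $M(t) = \int_0^t \langle B(X(s))^* \nabla \log h(s, X(s)), \mathrm{d}W(s)\rangle$, i.e.\ $Z(t) = \mathcal{E}(M)_t$. Since $Z$ is a true martingale with $\ebb[Z(T)]=1$ by hypothesis, the Novikov-type condition needed for Girsanov is already guaranteed.

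Then I would invoke the infinite-dimensional Girsanov theorem: under $\widehat\pbb$ defined by $\mathrm{d}\widehat\pbb/\mathrm{d}\pbb|_{\fcal_t} = Z(t)$, the process
\begin{align*}
\widehat{W}(t) = W(t) - \int_0^t B(X(s))^* \nabla \log h(s, X(s))\,\mathrm{d}s
\end{align*}
is a $Q$-Wiener process with respect to $\widehat\pbb$. Substituting $\mathrm{d}W(t) = \mathrm{d}\widehat W(t) + B(X(t))^* \nabla \log h(t, X(t))\,\mathrm{d}t$ into \cref{eq: sde} and collecting the extra drift $B(X(t))B(X(t))^* \nabla \log h(t, X(t))\,\mathrm{d}t$ yields exactly the claimed integral equation. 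I expect the main obstacle to be the analytic bookkeeping in the Itô step: one must check that the trace term $\operatorname{Tr}[B Q B^* \nabla^2 h]$ is well defined (using that $B$ takes values in the Hilbert–Schmidt operators from $Q^{1/2}U$ and $\nabla^2 h$ is bounded on bounded sets), that the stochastic integral against $\mathrm{d}W$ is a genuine (local) martingale, and — most delicately — justifying that the drift of $Z$ vanishes pathwise rather than merely in expectation, which requires care because $h$ is not assumed to solve a PDE classically but only to make $Z$ a martingale; here one argues via uniqueness of the semimartingale (Doob–Meyer) decomposition of $Z$. The rest is a direct translation of the finite-dimensional argument sketched in \Cref{sec: finite doob}.
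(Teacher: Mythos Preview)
Your proposal is correct and follows essentially the same route as the paper: apply the infinite-dimensional It\^o formula to $h(t,X(t))$, use the martingale assumption (via uniqueness of the semimartingale decomposition) to kill the drift, recognise $Z$ as the Dol\'eans exponential of $\int_0^t\langle B(X(s))^*\nabla\log h,\mathrm{d}W(s)\rangle$, invoke the infinite-dimensional Girsanov theorem, and substitute $\mathrm{d}W=\mathrm{d}\widehat W+B^*\nabla\log h\,\mathrm{d}t$ back into \cref{eq: sde}. The paper packages the first three steps into a separate lemma and adds some bookkeeping (Riesz representation to identify $\nabla\log h$, explicit computation of $[L]_t$ in the $Q^{1/2}(U)$ norm via \citet{rockner_concise_2007}), but the argument is the same.
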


We split the proof into two and start with a lemma showing that $Z(t)\coloneqq h(t, X(t))$ can be written in terms of an exponential.

\begin{lemma}\label{thm: z exp form}
Let $h:[0, T]\times H \to \rbb$ and $Z(t)\coloneqq h(t, X(t))$ satisfy the assumptions of \Cref{thm: infinite doob's}.
Then
$Z(t) = \exp\left(L(t) - \frac{[L](t)}{2}\right)$,
where
\begin{align}
L(t) = \int_0^t \langle B(X(s))^* \nabla \log h(s, X(s)), \mathrm{d}W(s)\rangle_{Q^{1/2}(U)},
\end{align}
and $[L]$ is the quadratic variation of $L$.
\end{lemma}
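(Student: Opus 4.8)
The plan is to obtain a stochastic-differential description of $Z(t) = h(t, X(t))$ via the infinite-dimensional It\^o formula, observe that the martingale hypothesis forces the drift to vanish, and then recognise the resulting driftless equation as a linear SDE whose solution is the Dol\'eans-Dade (stochastic) exponential.

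First I would apply the infinite-dimensional It\^o formula (as in \citet[Theorem 4.32]{da_prato_stochastic_2014} or the version valid for strong solutions of \cref{eq: sde}) to the function $h \colon [0,T]\times H \to \rbb_{>0}$, which is legitimate because $h$ is $C^{1,2}$ with continuous derivatives by hypothesis and $X$ is a strong solution. This yields
\begin{align*}
\mathrm{d}Z(t) &= \partial_t h(t, X(t))\,\mathrm{d}t + \langle \nabla h(t, X(t)), \mathrm{d}X(t)\rangle + \tfrac12 \operatorname{Tr}\bigl[\nabla^2 h(t, X(t))\, (B B^*)(X(t))\bigr]\mathrm{d}t \\
&= \Bigl(\partial_t h + \langle \nabla h, AX + F(X)\rangle + \tfrac12 \operatorname{Tr}[\nabla^2 h\, (BB^*)(X)]\Bigr)\mathrm{d}t + \langle \nabla h(t, X(t)), B(X(t))\,\mathrm{d}W(t)\rangle,
\end{align*}
where I suppress the argument $(t, X(t))$ in the drift for brevity. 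The martingale part is exactly $\langle B(X(t))^*\nabla h(t, X(t)), \mathrm{d}W(t)\rangle_{Q^{1/2}(U)}$ after moving $B$ to the other side of the inner product.

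Next, since $Z$ is assumed to be a martingale and is (by the It\^o representation above) a continuous semimartingale, its finite-variation part must be identically zero; hence the $\mathrm{d}t$-coefficient vanishes $\pbb$-almost surely for a.e.\ $t$. Therefore $\mathrm{d}Z(t) = \langle B(X(t))^*\nabla h(t,X(t)), \mathrm{d}W(t)\rangle_{Q^{1/2}(U)}$. Now I would divide through by $Z(t) = h(t,X(t)) > 0$ (strict positivity is assumed, so this is well-defined) and use $\nabla \log h = \nabla h / h$ to get $\mathrm{d}Z(t) = Z(t)\,\langle B(X(t))^*\nabla \log h(t, X(t)), \mathrm{d}W(t)\rangle_{Q^{1/2}(U)} = Z(t)\,\mathrm{d}L(t)$, with $L$ as defined in the statement. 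This is a linear (homogeneous) SDE for $Z$ driven by the continuous local martingale $L$, with $Z(0) = h(0, X(0)) = 1$; its unique solution is the Dol\'eans-Dade exponential $Z(t) = \mathcal{E}(L)(t) = \exp\bigl(L(t) - \tfrac12 [L](t)\bigr)$, which one verifies directly by another application of It\^o's formula to $\exp(L(t) - \tfrac12[L](t))$ (a one-dimensional It\^o computation once $L$ is viewed as a real-valued process). Alternatively one cites the standard uniqueness of solutions to the linear SDE $\mathrm{d}Z = Z\,\mathrm{d}L$.

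The main obstacle I anticipate is purely technical rather than conceptual: justifying the use of the It\^o formula in infinite dimensions with the unbounded operator $A$ present, and checking that all the integrals appearing (in particular $\langle \nabla h, AX\rangle$, which involves $X(t)\in D(A)$ only in a mild/weak sense) are well-defined. The clean way around this is to work with the weak/mild formulation of \cref{eq: sde} and invoke the version of It\^o's formula adapted to strong solutions as in \citet[Part II]{da_prato_stochastic_2014}, under which the $AX$ term is handled through the semigroup; since the theorem already assumes $X$ is a strong solution and $h$ is $C^{1,2}$, these hypotheses are precisely what makes the formula applicable. A secondary (minor) point is that the quadratic variation $[L](t) = \int_0^t \|B(X(s))^*\nabla\log h(s,X(s))\|^2_{Q^{1/2}(U)}\,\mathrm{d}s$ must be a.s.\ finite, which follows from continuity of the integrand along the a.s.\ continuous path $s\mapsto X(s)$ on the compact interval $[0,t]$.
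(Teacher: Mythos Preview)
Your proof is correct and follows essentially the same route as the paper: apply the infinite-dimensional It\^o formula to $h(t,X(t))$, use the martingale assumption to annihilate the drift, then identify $Z$ as the Dol\'eans--Dade exponential of $L$. Your concern about the unbounded $A$ is moot here since the standing hypothesis is that $X$ is a \emph{strong} solution (so $s\mapsto AX(s)+F(X(s))$ is pathwise $H$-integrable and the It\^o formula of \citet{brzezniak_itos_2008} applies directly); the paper proceeds exactly this way and additionally records the explicit form $[L]_t=\int_0^t\|B^*(X(s))\nabla\log h(s,X(s))\|_{Q^{1/2}(U)}^2\,\mathrm{d}s$ via \citet[Lemmas~2.4.2--2.4.3]{rockner_concise_2007}.
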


\begin{proof}
We denote the $j^{\text{th}}$ Fréchet derivative with respect to the $i^{\text{th}}$ argument of a function $f$ by $D^j_if$. In case $j=1$, we will simply write $D_if$.
First, we apply the infinite-dimensional Itô's lemma included in \cref{thm: Ito} \citep{brzezniak_itos_2008}.
We can do this since we assume that $h$ and its Fréchet derivatives $D_1h, D_2h, D_2^2h$ exist and are continuous. Moreover, $X$ is a strong solution, so $B(X(s))$ is stochastically integrable, and $AX(s) + F(X(s))$ is integrable and adapted.
Furthermore, since we assume that $h(t, X(t))$ is a martingale, the drift terms arising in Itô's lemma must disappear.
Therefore for $Z(t) \coloneqq h(t, X(t))$,
\begin{align}
Z(t) = h(0, X(0)) + \int_0^t D_2 h(s, X(s))B(X(s)) \mathrm{d}W(s).
\end{align}
Now, we write $Z$ in the form of an exponential. For this, we use Doléans exponential. Set
\begin{align}
L(t) &= \log Z(0) + \int \frac{1}{Z_s} \mathrm{d}Z_s.
\end{align}
By assumption $Z$ is continuous and strictly positive, so via the Doléans-Dade exponential \citep[Chapter 3]{rogers_diffusions_2000}, it holds that
\begin{align}
Z(t) = \exp\left(L(t) - \frac{[L](t)}{2}\right).
\end{align}
Since we defined $Z(t)= h(t, X(t))$, and we assume that $Z(0)=1$, we know that
\begin{align}
L(t) = \int_0^t D_2 \log h(s, X(s))B(X(s))\mathrm{d}W(s).
\end{align}
By the Riesz representation theorem, there exists an element in $H$ that we denote $\nabla \log h(s, X(s))$ such that
\begin{align}
D_2 \log h(s, X(s))(Y) = \langle \nabla\log h(s, X(s)), Y\rangle_H.
\end{align}
Hence, we get the claimed value for $L(t)$.

Lastly, we find the quadratic variation of $L$. We can equivalently write $L(t)$ as
\begin{align}
    L(t) = \int_0^t \Phi(s) \mathrm{d}W(t) \qquad \Phi(s)(u) \coloneqq \langle\nabla \log h(s, X(s)), B(X(s)) u \rangle_H,
\end{align}
where $\Phi(t) \in HS(Q^{1/2}(U), \rbb),$ \ie the space of Hilbert-Schmidt operators from $Q^{1/2}(U)$ to $\rbb$. Then \citet[Lemma 2.4.2]{rockner_concise_2007} states that 
\begin{align}
\|\Phi(t)\|_{HS(Q^{1/2}(U), \rbb)} = \|B^*(X(s))\nabla \log h(s, X(s))\|_{Q^{1/2}(U)}.
\end{align}
Finally, by \citet[Lemma 2.4.3]{rockner_concise_2007}, it holds that
\begin{align}
[L]_t = \int_0^t
\|B^*(X(s)) \nabla \log h(s, X(s))\|^2_{Q^{1/2}(U)} \mathrm{d}s.
\end{align}
\end{proof}

The proof of the theorem is now simply an application of Girsanov's theorem, normalising $Z$ by $\ebb[Z(T)]$ if necessary:
\begin{proof}
Let
$\psi(s) \coloneqq B^*(X(s)) \nabla \log h(s, X(s)).$
Then $\psi$ is a $Q^{1/2}(U)$-valued $\mathcal{F}_t$ predictable process.
By \Cref{thm: z exp form} it holds that
\begin{align}
Z(t)
= \exp\left(
\int_0^t\langle\psi(s), \mathrm{d} W(s)\rangle_{Q^{1/2}(U)}-\frac{1}{2} \int_0^t |\psi(s)|_{Q^{1/2}(U)}^2 \mathrm{d} s
\right)
\end{align}
Hence, applying Girsanov's theorem, we can define a new measure
$\mathrm{d}\widehat{\pbb}\coloneqq Z(T) \mathrm{d}\pbb$,
and know the Wiener process with respect to $\widehat{\pbb}$ has form
\begin{align}
\widehat{W}(t) = W(t) - \int_0^t B(X(s))^* \nabla \log h(s, X(s)) \mathrm{d}s.
\end{align}
Rewriting $X$ as a stochastic equation with respect to $\widehat{\mathbb{P}}$, we see that $X$ satisfies
\begin{align}
\begin{split}    
X(t) = &X(0) + \int_0^t B(X(s))B(X(s))^*\nabla \log h(s, X(s)) \mathrm{d}s\\
&+ \int_0^t [AX(s) + F(X(s))] \mathrm{d}s
+\int_0^t B(X(s))\mathrm{d}\widehat{W}(s),
\end{split}
\end{align}
giving the $h$-transformed process.
\end{proof}

\begin{lemma}\label{proof: h martingale}(\cref{h martingale} in paper)
Let $X$ be as in \cref{eq: sde}, satisfying \cref{a: unique strong solution}. Given a function $h:[0, T]\times H \to \rbb$ satisfying \Cref{eqn: h function}, $Z(t)\coloneqq h(t, X(t, \xi_0))$ is a strictly positive martingale such that $Z(0)=1$ and $Z(T)=C_T\cdot\psi(X(T, \xi_0))$.
\end{lemma}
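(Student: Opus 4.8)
The plan is to verify the three assertions directly, obtaining the martingale property as a consequence of the Markov property \cref{eq: markov property} (equivalently, of the Chapman--Kolmogorov identity for the transition operators). It is convenient to write $g(r, \eta) \coloneqq \ebb[\psi(X(r, \eta))] = P_r\psi(\eta)$, so that $h(t, \xi) = C_T\, g(T-t, \xi)$ and hence $Z(t) = C_T\, g(T-t, X(t, \xi_0))$.

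I would first dispatch the easy claims. Strict positivity holds because $\psi$ is strictly positive and bounded, so $g(r, \eta) \in (0, \infty)$ for every $r \in [0,T]$ and $\eta \in H$, and $C_T > 0$; as a byproduct $Z$ is bounded, hence integrable. For the boundary values, $Z(0) = C_T\, g(T, \xi_0) = C_T\, \ebb[\psi(X(T, \xi_0))] = 1$ by the definition of $C_T$ in \cref{eqn: h function}, and $Z(T) = C_T\, g(0, X(T, \xi_0)) = C_T\, \psi(X(T, \xi_0))$ since $X(0, \xi) = \xi$.

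For the martingale property I would fix $0 \le s \le t \le T$ and apply \cref{eq: markov property} with the bounded Borel integrand $\phi \coloneqq g(T-t, \cdot)$, which yields $\ebb[\phi(X(t, \xi_0)) \mid \fcal_s] = (P_{t-s}\phi)(X(s, \xi_0))$. The semigroup identity $P_{t-s} \circ P_{T-t} = P_{T-s}$ --- itself obtained by applying \cref{eq: markov property} twice and taking expectations --- then gives $\ebb[Z(t) \mid \fcal_s] = C_T\, (P_{T-s}\psi)(X(s, \xi_0)) = C_T\, g(T-s, X(s,\xi_0)) = Z(s)$, as required.

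The main technical point to be careful about is the measurability of $\eta \mapsto g(r, \eta)$, which is needed for $\phi$ to be an admissible integrand in \cref{eq: markov property}. This follows from \cref{a: unique strong solution} together with the standard fact that the strong Markov solution admits a jointly measurable version of the flow $(r, \eta) \mapsto X(r, \eta)$, so that $g(r, \cdot)$ is Borel by Fubini; I expect this to be the only step requiring more than a one-line justification, with the boundary identities, positivity, and integrability all being routine once it is in place.
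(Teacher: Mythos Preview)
Your proof is correct and follows essentially the same approach as the paper: both identify $Z(t)$ with $C_T\,\ebb[\psi(X(T,\xi_0))\mid\fcal_t]$ via the Markov property and then use the tower law (in your case phrased as the semigroup identity $P_{t-s}P_{T-t}=P_{T-s}$) to obtain the martingale property. Your version is a little more careful about integrability and the measurability of $\eta\mapsto g(r,\eta)$, which the paper leaves implicit; otherwise the arguments coincide.
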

\begin{proof}
The functions of form $h(t, \xi)$ are Markov transition operators satisfying \Cref{eq: markov property}.
Define $Z(t) \coloneqq h(t, X(t)).$
Then $Z(T) = \ebb[\psi(X(T, \xi_0))\mid X(T, \xi_0)]=\psi(X(T, \xi_0))$. 
Hence, $\ebb[Z(T)]=1$. 
Further, by the normalisation $C_T$, it holds $Z(0)= 1$. 
Strict positivity of $Z$ holds since $\psi$ is strictly positive. 
To see that $Z$ is a martingale, we use the tower property:
let $Y$ be a random variable and $\mathcal{H}_1 \subset \mathcal{H}_2 \subset \mathcal{F}$. Then
\begin{align}\mathbb{E}[\mathbb{E}[Y\mid\mathcal{H}_2]|\mathcal{H}_1]= \mathbb{E}[Y\mid \mathcal{H}_1].
\end{align}
Now, for $s < t, \; \mathcal{F}_s \subset \mathcal{F}_t$, we get
\begin{align}
\mathbb{E}[Z(t)\mid \mathcal{F}_s] = \mathbb{E}[\mathbb{E}[C_T\cdot\psi(X(T, \xi))\mid \mathcal{F}_t]\mid \mathcal{F}_s] = Z(s).
\end{align}
\end{proof}

\begin{lemma}\label{proof: twice diff}(\cref{lemma: twice diff} in paper)
Let $\psi: H \to \rbb$ be a continuous function, twice Fréchet differentiable, with continuous derivatives.
Then $h(t, \xi)\coloneqq \ebb[\psi(X(T-t, \xi))]$ is twice Fréchet differentiable in $\xi$ and once differentiable in $t$, with continuous derivatives.
\end{lemma}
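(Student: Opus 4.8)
The plan is to reduce the statement to a regularity property of the transition semigroup. Writing $s = T - t$ and $u(s,\xi) \coloneqq \ebb[\psi(X(s,\xi))]$, we have $h(t,\xi) = u(T-t,\xi)$, so $h$ is twice Fréchet differentiable in $\xi$ exactly when $u$ is, and differentiable in $t$ with continuous derivative exactly when $u$ is differentiable in $s$ with continuous derivative (the chain rule in $t$ contributing only a sign). Thus it suffices to show that $u$ is twice Fréchet differentiable in $\xi$, once differentiable in $s$, and that all these derivatives are jointly continuous on $[0,T] \times H$.

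For the $\xi$-regularity I would differentiate under the expectation. By \Cref{a: twice diff solution}, for each fixed $s$ the map $\xi \mapsto X(s,\xi)$ is $\pbb$-a.s.\ twice Fréchet differentiable with derivatives $D_\xi X(s,\xi)$, $D^2_\xi X(s,\xi)$ that are continuous on $[0,T]\times H$. Since $\psi \in C^2$ with continuous derivatives, the Fréchet chain and Leibniz rules give, pathwise, $D_\xi[\psi(X(s,\xi))] = D\psi(X(s,\xi)) \circ D_\xi X(s,\xi)$ and the corresponding second-order formula in terms of $D^2\psi$, $D_\xi X$ and $D^2_\xi X$. To move $D_\xi$ and $D^2_\xi$ inside $\ebb[\cdot]$ I would invoke the standard criterion: almost sure convergence of the first- and second-order difference quotients to these pathwise derivatives, together with uniform integrability of those quotients over small increments of the initial condition. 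The uniform integrability follows from local boundedness of $D\psi$ and $D^2\psi$ near $X(s,\xi)$ combined with $L^p$-moment bounds — uniform for $\xi$ in a bounded set — for $X(s,\xi)$, $D_\xi X(s,\xi)$ and $D^2_\xi X(s,\xi)$; these are exactly the estimates that accompany \Cref{a: unique strong solution,a: twice diff solution} under the Lipschitz and linear-growth hypotheses (\citet[Part II]{da_prato_stochastic_2014}). This gives $D_\xi u(s,\xi) = \ebb[D\psi(X(s,\xi)) \circ D_\xi X(s,\xi)]$ and the analogous formula for $D^2_\xi u$, and their joint continuity in $(s,\xi)$ follows by dominated convergence from continuity of the integrands and the same moment bounds.

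For the $s$-derivative I would apply the infinite-dimensional Itô formula (\cref{thm: Ito}), valid because $\psi \in C^2$ with continuous derivatives and $X(\cdot,\xi)$ is a strong solution of \cref{eq: sde}. Applying it to $r \mapsto \psi(X(r,\xi))$ and taking expectations annihilates the stochastic integral, yielding $u(s,\xi) = \psi(\xi) + \int_0^s \ebb[\mathcal{A}\psi(X(r,\xi))]\,\mathrm{d}r$, where $\mathcal{A}\psi(\eta) = D\psi(\eta)(A\eta + F(\eta)) + \tfrac12 \operatorname{Tr}[B(\eta)B(\eta)^* D^2\psi(\eta)]$ is the generator of \cref{eq: sde} acting on $\psi$; here I would use the boundedness of $A$ already invoked for \Cref{a: unique strong solution}, so that $\mathcal{A}\psi$ is continuous and of controlled growth. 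Since $r \mapsto \ebb[\mathcal{A}\psi(X(r,\xi))]$ is then continuous (continuity of $r \mapsto X(r,\xi)$, continuity of $\mathcal{A}\psi$, and dominated convergence), the fundamental theorem of calculus gives $\partial_s u(s,\xi) = \ebb[\mathcal{A}\psi(X(s,\xi))]$, hence $\partial_t h(t,\xi) = -\ebb[\mathcal{A}\psi(X(T-t,\xi))]$, which is jointly continuous in $(t,\xi)$ by the same dominated-convergence argument.

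I expect the main obstacle to be justifying the interchange of differentiation and expectation in infinite dimensions: the uniform-integrability step needs $L^p$-moment estimates not only for the solution but for its first and second Fréchet derivatives in the initial datum, and the generator term must be handled with care because of the unbounded operator $A$ — cleanest, as the paper already does, under the boundedness assumption on $A$ that guarantees \Cref{a: unique strong solution}. Everything else is routine bookkeeping with the chain rule and dominated convergence.
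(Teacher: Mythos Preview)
Your proposal is correct and follows essentially the same route as the paper: twice Fréchet differentiability in $\xi$ via the chain rule for $\psi\circ X(\cdot,\xi)$ combined with \Cref{a: twice diff solution} and dominated convergence to pass the derivative inside the expectation, and differentiability in $t$ via the infinite-dimensional Itô formula applied to $\psi(X(\cdot,\xi))$, taking expectations to kill the stochastic integral, and then the fundamental theorem of calculus on the resulting time integral. Your write-up is in fact more explicit than the paper's about the uniform-integrability and moment-bound hypotheses needed to justify differentiating under $\ebb[\cdot]$, which the paper glosses over with a one-line appeal to dominated convergence.
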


\begin{proof}
As before, we denote the $j^{\text{th}}$ Fréchet derivative with respect to the $i^{\text{th}}$ argument of a function $f$ by $D^j_if$. If $f$ only has one argument then we will instead write $D^jf$.
First note that by \Cref{a: twice diff solution} $X(t, \xi)$ is twice Fréchet differentiable with respect to $\xi$ and has continuous derivatives. By our assumption that $\psi$ is twice Fréchet differentiable with continuous derivatives, we know that the composition $\psi(X(t, \xi))$ is also twice Fréchet differentiable with second derivative
\begin{align}
\begin{split}
D_2^2(\psi\circ &X)(t, \xi)(h, g) = \\
&D_2^2\psi(X(t, \xi))(D_2 X(t, \xi) h, D_2 X(t, \xi) g) \\
&+ D_2\psi(X(t, \xi))(D_2^2X(t, \xi)(h, g)).
\end{split}
\end{align}
This is continuous in $[0, T] \times H$.
By Lebesgue's dominated convergence theorem and the definition of Fréchet differentiability, and noting that this holds for any $t \in [0, T]$, $h(t, \xi)\coloneqq \ebb[\psi(X(T-t, \xi))]$ is also twice Fréchet differentiable.

Next, we show that we can differentiate with respect to $t$.
By Itô's lemma and properties of expectation, it holds that:
\begin{align}\label{eq: E of ito}
\begin{split}
g(t, \xi) := &\mathbb{E}[\psi(X(t, \xi))]\\
= &\psi(\xi)
+\mathbb{E} \int_0^t (D\psi)(X(s, \xi))(AX(s, \xi)+F(X(s, \xi))) \mathrm{d}s \\
& ~~~+\frac{1}{2} \mathbb{E} \int_0^t \operatorname{Tr}_{B(X(s, \xi))Q^{1/2}}(D^2\psi)(X(s, \xi)) \mathrm{d}s.
\end{split}
\end{align}
Note that by assumed continuity properties of $X$ and $\psi$, \Cref{eq: E of ito} is continuous. Therefore, using Lebesgue's dominated convergence theorem and the fundamental theorem of calculus, we see
\begin{align}
D_1g(t, \xi) & =\lim _{r \to 0} \frac{1}{r}(g(t+r, \xi)-g(t, \xi)) \\
\begin{split}
&= \lim _{r \to 0} \frac{1}{r} \mathbb{E} \int_t^{t+r}(D\psi)(X(s, \xi))(AX(s, \xi)+F(X(s, \xi))) \mathrm{d}s \\
& \qquad +\frac{1}{2} \mathbb{E} \int_t^{t+r} \operatorname{Tr}_{B(X(s, \xi))Q^{1/2}}(D^2\psi)(X(s, \xi)) \mathrm{d}s\\ 
&=\mathbb{E} \left[(D\psi)(X(t, \xi))(AX(t, \xi)+F(X(t, \xi)))\right]\\
& \qquad +\frac{1}{2} \mathbb{E}\operatorname{Tr}_{B(X(t, \xi))Q^{1/2}}(D^2\psi)(X(t, \xi)),
\end{split}
\end{align}
and so $g(t, \xi)$ is differentiable with respect to $t$. Noting that $h(t, \xi) = g(T-t, \xi)$, and $t\to T-t$ is differentiable, we get the result. 
\end{proof}

\begin{lemma}\label{proof: discretisation} (\cref{lem: discretisation} in paper)
Let $\Gamma_N$ be as in \cref{eq: set A_N} and $h:[0, T] \times H^N \to \rbb$ be defined by $h(t, Y) \coloneq \ebb[\delta_{\Gamma_N} (X(T-t, Y))]$. 
Moreover, define $g: [0, T] \times \rbb^N \to \rbb$ by $g(t, y) \coloneq \ebb[\prod_{i=1}^N \delta_{\Gamma_i}(X_i^N(T-t, y))]$. Then $\langle\nabla \log h(t, Y), e_i\rangle = [\nabla \log g(t, (Y_i)_{i=1}^N)]_i$.
\end{lemma}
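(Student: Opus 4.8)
The plan is to show that, under the identification of $H^N$ with $\rbb^N$ afforded by the orthonormal basis $\{e_i\}_{i=1}^N$, the function $h(t,\cdot)$ \emph{is} the function $g(t,\cdot)$, after which the stated identity is the one-line chain rule for Fréchet derivatives.

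First I would unwind the notation. As in the definition of the transition operator, $\ebb[\delta_{\Gamma_N}(X(T-t,Y))] = \pbb(X(T-t,Y)\in\Gamma_N)$. Since $\Gamma_N$ is the cylinder set $\{\varphi : \varphi_i \in \Gamma_i,\ 1\le i\le N\}$, its defining characteristic factorises over the first $N$ coordinates: $\delta_{\Gamma_N}(\varphi) = \prod_{i=1}^N \delta_{\Gamma_i}(\langle\varphi, e_i\rangle)$ (equivalently, membership in $\Gamma_N$ is the intersection of the events $\{\langle\varphi,e_i\rangle\in\Gamma_i\}$). This is the observation flagged in the proof sketch. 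Recalling that by construction $X_i^N(t) = \langle X(t), e_i\rangle$, and that $Y\in H^N$ is identified with $(Y_i)_{i=1}^N\in\rbb^N$ via $Y_i=\langle Y,e_i\rangle$ (i.e.\ $Y=\sum_i Y_i e_i$, which is the initial value fed to the SDE), I obtain
\[
h(t,Y) \;=\; \pbb\big(\langle X(T-t,Y),e_i\rangle\in\Gamma_i,\ 1\le i\le N\big) \;=\; \ebb\Big[\textstyle\prod_{i=1}^N \delta_{\Gamma_i}\big(X_i^N(T-t,(Y_j)_{j=1}^N)\big)\Big] \;=\; g\big(t,(Y_i)_{i=1}^N\big).
\]

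Next I would make the coordinate identification precise and differentiate. Let $\Pi\colon H^N\to\rbb^N$, $\Pi Y=(\langle Y,e_i\rangle)_{i=1}^N$; since $\{e_i\}_{i=1}^N$ is orthonormal, $\Pi$ is a linear isometric isomorphism with adjoint $\Pi^* y=\sum_{i=1}^N y_i e_i$. The identity above reads $h(t,\cdot)=g(t,\cdot)\circ\Pi$ on $H^N$, hence $\log h(t,\cdot)=\log g(t,\cdot)\circ\Pi$ (both sides are positive by the standing positivity assumptions on the transition function). Assuming $g$ is differentiable in its spatial argument — which holds whenever the $\Gamma_i$ satisfy the finite-dimensional analogue of \cref{a: twice diff transition fn}, so that $\nabla\log g$ exists — the chain rule gives $D_Y\log h(t,Y)=D\log g(t,\Pi Y)\circ\Pi$, i.e.\ $\nabla\log h(t,Y)=\Pi^*\nabla\log g(t,\Pi Y)=\sum_{i=1}^N [\nabla\log g(t,\Pi Y)]_i\, e_i$. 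Pairing with $e_i$ and using orthonormality yields $\langle\nabla\log h(t,Y),e_i\rangle = [\nabla\log g(t,(Y_j)_{j=1}^N)]_i$, which is the claim.

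The only real content is the first step: justifying that the cylinder-set factorisation holds and, more importantly, that the $i$-th coordinate of the infinite-dimensional solution started from $Y\in H^N$ is exactly the process $X_i^N$ that $g$ integrates — that projecting the solution onto $H^N$ and taking components agrees with the finite-dimensional object in the definition of $g$. This is precisely the weak-solution/projection identity $X_i^N(t)=\langle X(t),e_i\rangle$ recorded before \cref{eq: set A_N}, so nothing beyond invoking it is needed; everything afterwards is the routine chain-rule computation above. I expect the only place a reader might want more care is confirming that no coordinates beyond the first $N$ enter, which is immediate since $\Gamma_N$ constrains only $\varphi_1,\dots,\varphi_N$.
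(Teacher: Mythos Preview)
Your proposal is correct and follows essentially the same route as the paper: first establish $h(t,Y)=g(t,(Y_i)_{i=1}^N)$ via the cylinder-set factorisation $\delta_{\Gamma_N}(\varphi)=\prod_i\delta_{\Gamma_i}(\langle\varphi,e_i\rangle)$ together with $X_i^N=\langle X,e_i\rangle$, and then read off the gradient identity from this equality. The only cosmetic difference is that you phrase the differentiation step via the chain rule through the isometry $\Pi$, whereas the paper writes out the difference quotient directly; these are the same computation.
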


\begin{proof}

First note that for any $Y\in H$, 
\begin{align}
    h(t, Y) &= \ebb[\delta_{\Gamma_N}(X(T-t, Y))]\\
    &= \ebb [\prod_{i=1}^N \delta_{\Gamma_i}(\langle X(T-t, Y), e_i\rangle)]\\
    & = \ebb[\prod_{i=1}^N \delta_{\Gamma_i}(X^N_i(T-t, (Y)_{i=1}^N))] = g(t, (Y_i)_{i=1}^N).
\end{align}
Now note that for any $\varepsilon \in H^N$
\begin{align}
    \frac{h(t, Y+ \varepsilon) - h(t, Y)}{\|\varepsilon\|_{H^N}} = \frac{g(t, (Y_i)_{i=1}^N + (\varepsilon_i)_{i=1}^N) - g(t, (Y_i)_{i=1}^N)}{\|(\varepsilon_i)_{i=1}^N\|_{\rbb^N}}.
\end{align}
Hence, by the properties of the Fréchet derivative, it holds $\langle \nabla \log h(t, Y), e_i\rangle = [\nabla \log g(t, (Y_i)_{i=1}^N)]_i$.

\end{proof}

\section{Further background}
\subsection{Infinite dimensional Itô and Girsanov}
In order to condition, we rely heavily on the infinite-dimensional analogues of Itô's lemma and Girsanov's theorem.  
We state the exact version of both that we use.

Girsanov's theorem \citep[Section 10.2.1]{da_prato_stochastic_2014} allows us to define a change or reweighting of the measure and also tells us what stochastic processes look like with regard to this new measure. Hence we can use this to get a change of measure which possesses some wanted behaviour and then write stochastic processes with respect to this measure.
If we have a martingale $Z$ with respect to a probability measure $\pbb$, then we define a new probability measure $\mathrm{d}\widehat{\pbb}\coloneqq Z(T)\mathrm{d}\pbb$, and we can write the $Q$-Wiener process of $\widehat{\pbb}$ in terms of the original Wiener process.
\begin{theorem}\label{thm: Girsanov}
Let $W$ be a $Q$-Wiener process in $U$ and let $\psi(\cdot)$ be a $Q^{1/2}(U)$-valued $\mathcal{F}_t$-predictable process. If
\begin{align}
Z(t)\coloneqq \exp\left(
\int_0^t\langle\psi(s), \mathrm{d} W(s)\rangle-\frac{1}{2} \int_0^t |\psi(s)|^2 \mathrm{d} s 
\right),
\end{align}
where the inner product and norm are in the Hilbert space ${Q^{\frac{1}{2}}(U)}$, then
 $\ebb[Z(T)] = 1.$
Then 
\begin{align}\widehat{W}(t)=W(t)-\int_0^t \psi(s) \mathrm{d} s \quad t \in[0, T]
\end{align}
is a Q-Wiener process with respect to $\left\{\mathcal{F}_t\right\}_{t\geq 0}$ on $(\Omega, \mathcal{F}, \widehat{\mathbb{P}})$ for
$\mathrm{d} \widehat{\mathbb{P}}=Z(T) \mathrm{d}\mathbb{P}$.
\end{theorem}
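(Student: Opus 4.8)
The plan is to follow the classical two–stage route. First, verify that $Z$ is a genuine $\pbb$–martingale on $[0,T]$, so that $\mathrm{d}\widehat{\mathbb{P}} = Z(T)\,\mathrm{d}\pbb$ really is a probability measure equivalent to $\pbb$. Second, identify $\widehat{W}(t) = W(t) - \int_0^t \psi(s)\,\mathrm{d}s$ as a $Q$-Wiener process under $\widehat{\mathbb{P}}$ by a Lévy-type characterisation, reduced to scalar processes by testing against elements $u\in U$. (An entirely equivalent alternative, and the one taken in \citet[Section 10.2]{da_prato_stochastic_2014}, is to project $W$ and $\psi$ onto a fixed orthonormal basis of $U$ and invoke the finite-dimensional Girsanov theorem componentwise, then reassemble; I would mention this but carry out the intrinsic version.)

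For the first stage, write $M(t) = \int_0^t \langle \psi(s), \mathrm{d}W(s)\rangle_{Q^{1/2}(U)}$, a continuous real local martingale with $[M]_t = \int_0^t |\psi(s)|_{Q^{1/2}(U)}^2\,\mathrm{d}s$. Applying the one–dimensional Itô formula to $x\mapsto e^x$ at $x = M(t) - \tfrac12[M]_t$ gives $\mathrm{d}Z(t) = Z(t)\,\mathrm{d}M(t)$, so $Z$ is a strictly positive continuous local martingale, hence a supermartingale with $\ebb[Z(T)]\le 1$; the hypothesis $\ebb[Z(T)] = 1$ (guaranteed e.g.\ by a Novikov condition $\ebb[\exp(\tfrac12[M]_T)]<\infty$) upgrades $Z$ to a true martingale. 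Then $\widehat{\mathbb{P}}$ is a probability measure equivalent to $\pbb$, and for $\mathcal{F}_t$-measurable integrable $Y$ the abstract Bayes rule gives $\widehat{\mathbb{E}}[Y\mid\mathcal{F}_s] = Z(s)^{-1}\ebb[YZ(t)\mid\mathcal{F}_s]$; continuity of $\widehat{W}$ and $\widehat{W}(0)=0$ are immediate since $\int_0^T|\psi(s)|\,\mathrm{d}s<\infty$.

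For the second stage, fix $u\in U$ and set $\widehat{W}_u(t) := \langle \widehat{W}(t), u\rangle$. It suffices to show that for every $u\in U$ the complex process $N_u(t) := \exp\bigl(i\widehat{W}_u(t) + \tfrac{t}{2}\langle Qu,u\rangle\bigr)$ is a $\widehat{\mathbb{P}}$-martingale, since conditioning then yields $\widehat{\mathbb{E}}\bigl[e^{i\langle u,\widehat{W}(t)-\widehat{W}(s)\rangle}\mid\mathcal{F}_s\bigr] = e^{-\frac{t-s}{2}\langle Qu,u\rangle}$, which together with path continuity is exactly the defining property of a $Q$-Wiener process (with the covariance operator recovered by polarisation in $u$). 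By the Bayes rule, $N_u$ is a $\widehat{\mathbb{P}}$-martingale iff $N_uZ$ is a $\pbb$-martingale; I would compute $\mathrm{d}(N_uZ)$ by the Itô product rule, using that the cross-variation of $\langle W(\cdot),u\rangle$ with $M$ equals $\int_0^\cdot\langle \psi(s)^\sharp, Q u\rangle\,\mathrm{d}s$ (with $\psi^\sharp$ the Riesz representative), and checking that the drift coming from the $-\int_0^\cdot\psi\,\mathrm{d}s$ term, the Itô correction $-\tfrac12\langle Qu,u\rangle\,\mathrm{d}t$ and the explicit $+\tfrac12\langle Qu,u\rangle\,\mathrm{d}t$ factor cancel exactly, leaving a stochastic integral; a standard integrability bound (again from Novikov) makes it a true martingale.

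The main obstacle is precisely the genuine-martingale property of $Z$, since there is no uniform bound available in general — this is why a Novikov-type assumption is really needed, and I would either assume it or state it as the hypothesis under which $\ebb[Z(T)]=1$ holds. The second delicate point is the bookkeeping in the cross-variation computation of Step three: one must keep track of the covariance operator $Q$ and of the fact that $\psi$ is $Q^{1/2}(U)$-valued (not $U$-valued), so that the $Q$'s appearing in the Itô correction and in the drift match up; getting these factors right is where the argument could go wrong, but it is routine once the correct identifications of $Q^{1/2}(U)$ with its dual are fixed.
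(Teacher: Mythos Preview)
The paper does not actually prove this theorem: it is stated in the appendix as background material, with the proof deferred to \citet[Section 10.2.1]{da_prato_stochastic_2014}. So there is no ``paper's own proof'' to compare against.

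Your outline is a correct sketch of the standard argument --- the Dol\'eans exponential gives a positive local martingale, the hypothesis $\ebb[Z(T)]=1$ upgrades it to a true martingale, and then a L\'evy-type characterisation (or, equivalently, the coordinate-wise reduction to finite-dimensional Girsanov that da Prato--Zabczyk use) identifies $\widehat{W}$ as a $Q$-Wiener process under $\widehat{\pbb}$. One small comment: in the paper's statement the condition $\ebb[Z(T)]=1$ is \emph{assumed}, not derived from a Novikov condition, so your discussion of Novikov is supplementary rather than part of what needs to be proved. Otherwise there is nothing to correct.
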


Another theorem we shall be relying on is the infinite-dimensional analogue of Itô's lemma. 
This is the analogue of the chain rule for Hilbert space-valued processes. 
The version we use is adapted from a more general version for Banach spaces \citep[Theorem 2.4]{brzezniak_itos_2008}. 
\citet{da_prato_mild_2019} discuss Itô's lemma for Hilbert space-valued SDEs.

\begin{theorem}
\label{thm: Ito}
Let $H$ and $E$ be separable Hilbert spaces. Assume that $f$ : $[0, T] \times H \rightarrow E$ is of class $C^{1,2}$. Let $\Phi:[0, T] \times \Omega \rightarrow \mathrm{HS}(Q^{1/2}(U), H)$ be measurable and stochastically integrable with respect to $W$, a $Q$-Wiener process on $U$. Let $\psi:[0, T] \times \Omega \rightarrow H$ be measurable and adapted with paths in $L^1(0, T ; H)$ almost surely. Let $\xi_0: \Omega \rightarrow H$ be $\mathcal{F}_0$-measurable. Define $X:[0, T] \times \Omega \rightarrow H$ by
\begin{align}
    X(t)=\xi_0+\int_0^{t} \psi(s) \mathrm{d}s+\int_0^{t} \Phi(s) \mathrm{d}W(s) .
\end{align}
Then almost surely for all $t \in[0, T]$,
\begin{align}
\begin{split}
f(t, X(t))
=f(0, \xi_0) 
&+\int_0^t D_1 f(s, X(s)) \mathrm{d}s
+\int_0^t D_2 f(s, X(s)) \psi(s) \mathrm{d} s\\
&+\frac{1}{2} \int_0^t \mathrm{Tr}_{\Phi(s)Q^{1/2}} D_2^2 f(s, X(s))\mathrm{d} s
+\int_0^t D_2 f(s, X(s)) \Phi(s)\mathrm{d} W(s).
\end{split}
\end{align}
where $\mathrm{Tr}_{\Phi(s)Q^{1/2}} D_2^2 f(s, X(s))$ is defined as
\begin{align}
\sum_{j\geq 1}D_2^2 f(s, X(s))\left(\Phi(s)Q^{1/2} u_j, \Phi(s)Q^{1/2} u_j\right),
\end{align}
for an orthonormal basis $\{u_j\}_{j\geq 1}$ of $U$.
\end{theorem}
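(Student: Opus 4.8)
The plan is to deduce the identity from the classical finite-dimensional Itô formula by localising and then running a partition/Taylor-expansion argument, passing to the limit along a vanishing mesh; an alternative is to push everything through finite-dimensional projections of $U$ (truncating the spectral expansion $W=\sum_k\sqrt{\lambda_k}\beta_k e_k$ of the $Q$-Wiener process) and of $H$, but the direct route is cleaner to sketch. For the localisation: since $X$ has a.s.\ continuous paths, for each $\omega$ the set $\{X(s,\omega):s\in[0,t]\}$ is compact in $H$, so $f$ together with $D_1f,D_2f,D_2^2f$ is bounded and uniformly continuous on a (random) neighbourhood of it — this is the one place where the non-compactness of balls in $H$ must be worked around, and it is worked around purely via path-continuity. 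Introducing the stopping times $\tau_R=\inf\{s:\|X(s)\|+\int_0^s\|\psi(r)\|\,\mathrm{d}r+\int_0^s\|\Phi(r)\|_{\mathrm{HS}}^2\,\mathrm{d}r\ge R\}\wedge t$ and replacing $\psi,\Phi$ by $\mathbf 1_{[0,\tau_R]}\psi$, $\mathbf 1_{[0,\tau_R]}\Phi$, it suffices to prove the formula when in addition $\psi,\Phi$ are bounded, $X$ stays in a fixed ball, and $f,D_1f,D_2f,D_2^2f$ are bounded and uniformly continuous there; the general case then follows on letting $R\uparrow\infty$, since $\tau_R\uparrow t$ a.s.

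Next I would fix $t$ and a partition $0=t_0<\dots<t_N=t$ of mesh $\delta\to0$, telescope $f(t,X(t))-f(0,\xi_0)=\sum_k[f(t_{k+1},X(t_{k+1}))-f(t_k,X(t_k))]$, and split each summand into the time increment $\int_{t_k}^{t_{k+1}}D_1f(s,X(t_{k+1}))\,\mathrm{d}s$ and a spatial increment. Writing $\Delta_k=X(t_{k+1})-X(t_k)=a_k+m_k$ with $a_k=\int_{t_k}^{t_{k+1}}\psi\,\mathrm{d}s$ and $m_k=\int_{t_k}^{t_{k+1}}\Phi\,\mathrm{d}W$, I Taylor-expand the spatial increment at $X(t_k)$:
\begin{align*}
f(t_k,X(t_{k+1}))-f(t_k,X(t_k))=D_2f(t_k,X(t_k))\Delta_k+\tfrac12 D_2^2f(t_k,X(t_k))(\Delta_k,\Delta_k)+R_k,
\end{align*}
with $\|R_k\|\le\omega(\|\Delta_k\|)\,\|\Delta_k\|^2$ for $\omega$ a modulus of continuity of $D_2^2f$. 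The time sum converges to $\int_0^tD_1f(s,X(s))\,\mathrm{d}s$ by uniform continuity; the first-order sum splits into an $a_k$-part converging to $\int_0^tD_2f(s,X(s))\psi(s)\,\mathrm{d}s$ and an $m_k$-part which, because the coefficient $D_2f(t_k,X(t_k))$ is $\mathcal F_{t_k}$-measurable, is a nonanticipating Itô--Riemann sum converging in $L^2$ to $\int_0^tD_2f(s,X(s))\Phi(s)\,\mathrm{d}W(s)$ by the Itô isometry and path-continuity of $s\mapsto D_2f(s,X(s))\Phi(s)$; and the remainder satisfies $\sum_k\|R_k\|\le(\sup_k\omega(\|\Delta_k\|))\sum_k\|\Delta_k\|^2\to0$ in probability, since $\sup_k\|\Delta_k\|\to0$ by uniform path-continuity while $\sum_k\|\Delta_k\|^2$ is tight ($\mathbb E\sum_k\|m_k\|^2\to\int_0^t\|\Phi(s)Q^{1/2}\|_{\mathrm{HS}}^2\,\mathrm{d}s$ and the $a_k$-contribution is $O(\delta)$).

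\emph{The heart of the argument is the second-order term.} In $\tfrac12\sum_kD_2^2f(t_k,X(t_k))(\Delta_k,\Delta_k)$ I expand $(\Delta_k,\Delta_k)=(a_k,a_k)+2(a_k,m_k)+(m_k,m_k)$; the first two pieces sum to $O(\delta)$ and drop out. For the martingale piece I write $\tfrac12\sum_kD_2^2f(t_k,X(t_k))(m_k,m_k)$ as $\tfrac12\sum_kD_2^2f(t_k,X(t_k))\,\mathbb E\big((m_k,m_k)\mid\mathcal F_{t_k}\big)$ plus a sum of martingale differences whose squared $L^2$-norm equals $\sum_k\mathbb E|\cdot|^2\le C\,\mathbb E\sum_k\|m_k\|^4=O(\delta)$, using the Hilbert-space Burkholder inequality and boundedness of $\Phi$ and $D_2^2f$. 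Pulling the $\mathcal F_{t_k}$-measurable bilinear form out and using $\mathbb E\big(m_k\otimes m_k\mid\mathcal F_{t_k}\big)=\mathbb E\big(\int_{t_k}^{t_{k+1}}(\Phi(s)Q^{1/2})(\Phi(s)Q^{1/2})^*\,\mathrm{d}s\mid\mathcal F_{t_k}\big)$ identifies $\mathbb E\big(D_2^2f(t_k,X(t_k))(m_k,m_k)\mid\mathcal F_{t_k}\big)$ with $\mathbb E\big(\int_{t_k}^{t_{k+1}}\sum_jD_2^2f(t_k,X(t_k))(\Phi(s)Q^{1/2}u_j,\Phi(s)Q^{1/2}u_j)\,\mathrm{d}s\mid\mathcal F_{t_k}\big)$, so the whole sum converges to $\tfrac12\int_0^t\mathrm{Tr}_{\Phi(s)Q^{1/2}}D_2^2f(s,X(s))\,\mathrm{d}s$ by uniform continuity; the trace series itself converges in $E$ because $D_2^2f$ is a bounded bilinear map and $\sum_j\|\Phi(s)Q^{1/2}u_j\|_H^2=\|\Phi(s)Q^{1/2}\|_{\mathrm{HS}}^2<\infty$. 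Collecting the limits along a subsequence of partitions gives the claimed identity a.s.\ for the fixed $t$, and since both sides are a.s.\ continuous in $t$ it then holds a.s.\ for all $t$ simultaneously.

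I expect the two genuine difficulties to be: (i) making the localisation rigorous without local compactness, i.e.\ deriving uniform control of $f$ and its derivatives along paths from the a.s.\ continuity of $X$ alone; and (ii) the bookkeeping in the second-order term, where one must justify interchanging the infinite trace series with conditional expectations and with the partition limit and control all terms in the Hilbert--Schmidt norm — precisely the step where the hypotheses $\Phi\in\mathrm{HS}(Q^{1/2}(U),H)$ and $Q$ trace class are indispensable.
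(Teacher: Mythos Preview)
The paper does not actually prove this theorem: it is stated in the appendix as background material, adapted from \cite{brzezniak_itos_2008} (Theorem~2.4) for Banach spaces and \cite{da_prato_mild_2019} for the Hilbert setting, and then used as a tool in the proofs of the paper's own results (Theorem~\ref{thm: infinite doob's} and Lemma~\ref{lemma: twice diff}). So there is no ``paper's own proof'' to compare against; the authors simply cite the result.

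That said, your sketch is the standard localisation-plus-partition proof and is essentially how the cited references proceed in the Hilbert case. The localisation via $\tau_R$ and path-compactness is correct (continuity of $X$ makes $\{X(s):s\in[0,t]\}$ compact, so $C^{1,2}$ regularity gives uniform bounds and moduli on that set without needing local compactness of $H$); the Taylor telescoping and the treatment of the first-order terms are routine; and your handling of the second-order term --- splitting off the martingale-difference piece via a fourth-moment/Burkholder bound and identifying the compensator through $\mathbb{E}(m_k\otimes m_k\mid\mathcal F_{t_k})=\int_{t_k}^{t_{k+1}}\Phi Q\Phi^*\,\mathrm{d}s$ --- is exactly the standard argument. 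The two difficulties you flag are the right ones and are where the $\mathrm{HS}(Q^{1/2}(U),H)$ hypothesis does its work. The Brze\'zniak--van Neerven--Veraar--Weis result you would be specialising is considerably more delicate because it handles UMD Banach spaces where there is no Hilbert--Schmidt structure and the quadratic variation must be replaced by $\gamma$-radonifying norms; in the Hilbert setting your outline is adequate.
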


\end{document}